\theoremstyle{plain}
\newtheorem{theorem}{Theorem}[section]
\newtheorem{proposition}[theorem]{Proposition}
\theoremstyle{definition}
\theoremstyle{remark}
\definecolor{Red}{rgb}{0.6,0,0}
\definecolor{Blue}{rgb}{0,0,0.8}
\definecolor{Green}{rgb}{0,0.4,0.7}
\definecolor{airforceblue}{rgb}{0.36, 0.54, 0.66}
\definecolor{ao(english)}{rgb}{0.0, 0.5, 0.0}
\definecolor{azure(colorwheel)}{rgb}{0.0, 0.5, 1.0}
\definecolor{crimson}{rgb}{0.86, 0.08, 0.24}
\definecolor{darkcerulean}{rgb}{0.03, 0.27, 0.49}
\definecolor{cobalt}{rgb}{0.0, 0.28, 0.67}
\definecolor{rosegold}{rgb}{0.72, 0.43, 0.47}
\definecolor{orange-red}{rgb}{1.0, 0.27, 0.0}
\definecolor{mountainmeadow}{rgb}{0.19, 0.73, 0.56}
\definecolor{malachite}{rgb}{0.04, 0.85, 0.32}
\definecolor{darkblue}{rgb}{0.0, 0.0, 0.55}
\definecolor{customblue}{rgb}{0.2, 0.35, 0.8}
\definecolor{beaublue}{rgb}{0.74, 0.83, 0.9}
\definecolor{ggr}{gray}{0.92}
\newcolumntype{a}{>{\columncolor{ggr}}c}
\definecolor{g}{HTML}{EEFFDD}
\definecolor{gg}{HTML}{DDEECC}
\definecolor{ggg}{HTML}{CCDDBB}
\icmltitlerunning{DPCore for Continual Test-Time Adaptation}
\begin{document}

\twocolumn[
\icmltitle{
DPCore: Dynamic Prompt Coreset for Continual Test-Time Adaptation
% Dynamic Domains, Dynamic Solutions:\\ DPCore for Continual Test-Time Adaptation
}

% It is OKAY to include author information, even for blind
% submissions: the style file will automatically remove it for you
% unless you've provided the [accepted] option to the icml2025
% package.

% List of affiliations: The first argument should be a (short)
% identifier you will use later to specify author affiliations
% Academic affiliations should list Department, University, City, Region, Country
% Industry affiliations should list Company, City, Region, Country

% You can specify symbols, otherwise they are numbered in order.
% Ideally, you should not use this facility. Affiliations will be numbered
% in order of appearance and this is the preferred way.
\icmlsetsymbol{equal}{*}

\begin{icmlauthorlist}
\icmlauthor{Yunbei Zhang}{yyy}
\icmlauthor{Akshay Mehra}{dolby}
\icmlauthor{Shuaicheng Niu}{sch}
\icmlauthor{Jihun Hamm}{yyy}
%\icmlauthor{}{sch}
%\icmlauthor{}{sch}
\end{icmlauthorlist}

\icmlaffiliation{yyy}{Tulane University}
\icmlaffiliation{sch}{Nanyang Technological University}
\icmlaffiliation{dolby}{Dolby Laboratories}

\icmlcorrespondingauthor{Yunbei Zhang}{yzhang111@tulane.edu}
\icmlcorrespondingauthor{Jihun Hamm}{jhamm3@tulane.edu}

% You may provide any keywords that you
% find helpful for describing your paper; these are used to populate
% the "keywords" metadata in the PDF but will not be shown in the document
\icmlkeywords{Machine Learning, ICML}

\vskip 0.3in
]

% this must go after the closing bracket ] following \twocolumn[ ...

% This command actually creates the footnote in the first column
% listing the affiliations and the copyright notice.
% The command takes one argument, which is text to display at the start of the footnote.
% The \icmlEqualContribution command is standard text for equal contribution.
% Remove it (just {}) if you do not need this facility.

\printAffiliationsAndNotice{}  % leave blank if no need to mention equal contribution
% \printAffiliationsAndNotice{\icmlEqualContribution} % otherwise use the standard text.

\begin{abstract}
Continual Test-Time Adaptation (CTTA) seeks to adapt source pre-trained models to continually changing, unseen target domains. While existing CTTA methods assume structured domain changes with uniform durations, real-world environments often exhibit dynamic patterns where domains recur with varying frequencies and durations. Current approaches, which adapt the same parameters across different domains, struggle in such dynamic conditions—they face convergence issues with brief domain exposures, risk forgetting previously learned knowledge, or misapplying it to irrelevant domains. To remedy this, we propose \textbf{DPCore}, a method designed for robust performance across diverse domain change patterns while ensuring computational efficiency. DPCore integrates three key components: Visual Prompt Adaptation for efficient domain alignment, a Prompt Coreset for knowledge preservation, and a Dynamic Update mechanism that intelligently adjusts existing prompts for similar domains while creating new ones for substantially different domains. Extensive experiments on four benchmarks demonstrate that DPCore consistently outperforms various CTTA methods, achieving state-of-the-art performance in both structured and dynamic settings while reducing trainable parameters by 99\% and computation time by 64\% compared to previous approaches. Code is available at \href{https://github.com/yunbeizhang/DPCore}{https://github.com/yunbeizhang/DPCore}.
\end{abstract}

\section{Introduction}
Despite remarkable advances in deep learning, Deep Neural Networks (DNNs) often face significant performance degradation when encountering domain discrepancies between training and test environments \cite{recht2019imagenet, hendrycks2019benchmarking, koh2021wilds}. This challenge, ubiquitous in real-world applications, has spurred the development of Test-Time Adaptation (TTA) \cite{wang2021tent,NEURIPS2021_1415fe9f, liang2020we, gao2023visual, schneider2020improving, mirza2022norm, mirza2023actmad, niu2024test, liang2023ttasurvey, sun2020test}. 
TTA %provides an efficient strategy to 
adapts pre-trained models to unseen target domains at test time without altering the original training process, making it particularly suitable for practical applications.
\begin{figure}[tb]
    \centering
    \includegraphics[width=\linewidth]{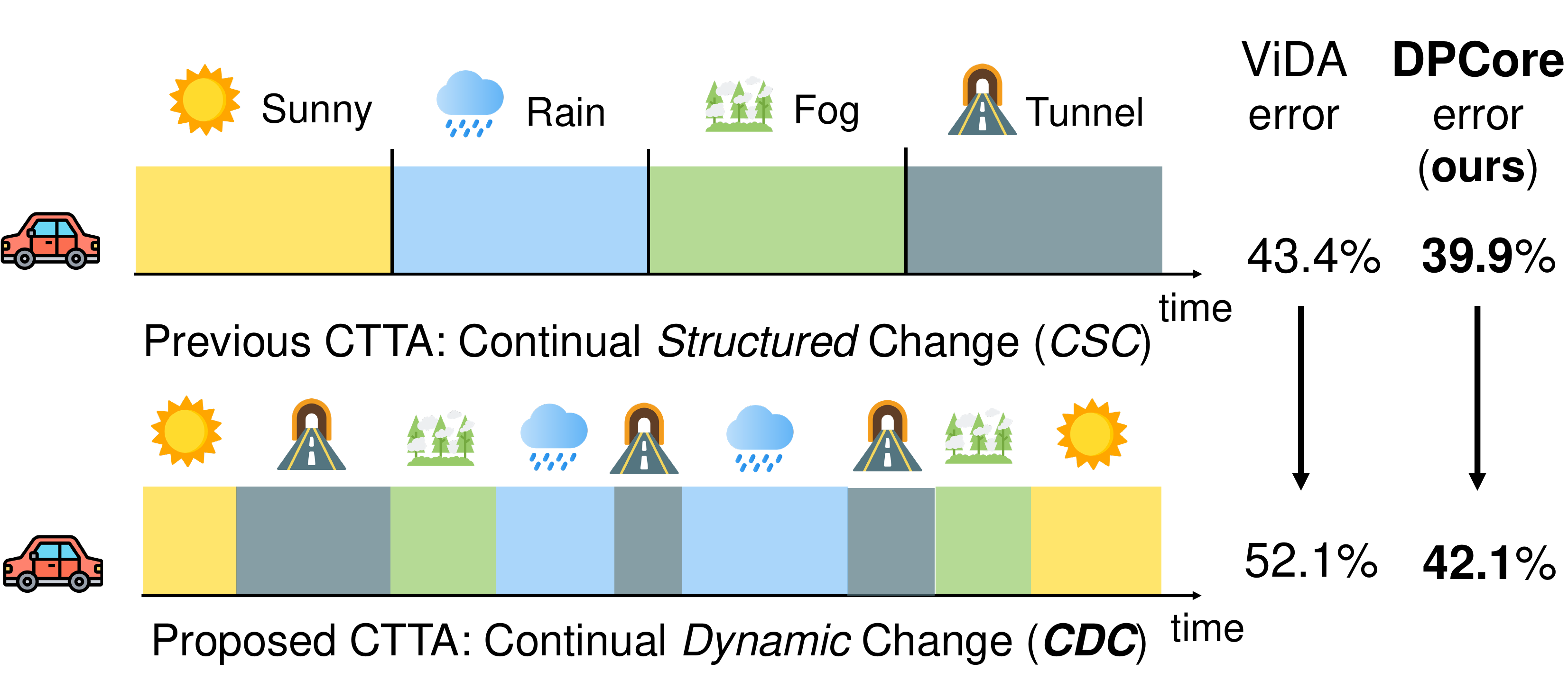}
    % \vspace{-.3cm}
    \caption{
    Illustrated through an autonomous driving scenario where a vehicle encounters varying weather and lighting conditions. The top panel shows the conventional CSC setting with structured, uniform-length domain transitions, while the bottom panel illustrates our proposed CDC setting where domains recur with varying frequencies and durations—better reflecting real-world challenges.  When evaluated on ImageNet-to-ImageNet-C with ViT base model, previous SOTA ViDA's error rate increases significantly from 43.4\% to 52.1\% when moving from CSC to CDC, while DPCore maintains robust performance (39.9\% to 42.1\%).
    % Our proposed CDC setting differs from traditional CTTA settings by introducing more frequent domain changes and varying domain durations. When evaluated on ImageNet-to-ImageNet-C with ViT base model, ViDA's performance (error rate $\downarrow$) degrades significantly by 8.7\% in CDC compared to CSC. In contrast, DPCore demonstrates robustness across both settings with only a 2.2\% performance drop.
    }
    \label{fig:settings}
    % \vspace{-0.5cm}
\end{figure}

Real-world scenarios, however, present an even more challenging problem: continuously changing target domains. Consider an autonomous driving system that must rapidly adapt to varying weather conditions, lighting changes, and road conditions. Continual Test-Time Adaptation (CTTA) \cite{wang2022continual, niu2022efficient} has emerged to address this challenge. The initial CTTA setting \cite{wang2022continual}, which we term Continual Structured Change (\textbf{CSC}), assumed domains change in a structured manner with uniform durations. However, real-world scenarios exhibit far more dynamic patterns where domains may recur multiple times with varying durations, from brief encounters to extended periods. We term this more realistic scenario Continual Dynamic Change (\textbf{CDC}). For instance, consider an autonomous vehicle driving through a mountainous highway, where it frequently transitions between sunny conditions, dark tunnels, dense fog in valleys, and sudden rain showers due to high humidity,
%and rapidly changing weather patterns, 
as illustrated in the bottom of Fig.~\ref{fig:settings}. 
%In such unpredictable environments, existing CTTA methods struggle significantly. 
Our experiments show that state-of-the-art (SOTA) CTTA methods like ViDA \cite{liu2023vida} suffer substantial degradation in performance, with error rates increasing by 8.7\% (from 43.4\% to 52.1\%) on ImageNet-to-ImageNet-C when moving from CSC to CDC settings.

When encountering CDC environments, existing CTTA methods, which typically adapt the same parameters across different domains \cite{wang2021tent, liu2023vida, song2023ecotta}, face three critical limitations. % that become particularly severe due to frequent domain changes and varying domain durations. 
First, \emph{Convergence Issue}: these methods inject numerous parameters that require substantial data to converge. While this is achievable in CSC where each domain persists for a long duration, it becomes problematic in CDC where certain domains appear only briefly. As shown in Fig.~\ref{fig:imagenet_cdc_delta_1}, the increasing error rates during initial adaptation demonstrate that these methods struggle to achieve proper convergence with limited domain exposure. Second, \emph{Catastrophic Forgetting}: since these methods continuously update the same large parameter space across all domains, knowledge learned from previous domains is progressively overwritten through successive adaptations, resulting in long-term performance degradation on previously encountered domains. Third, \emph{Negative Transfer}: knowledge learned from one domain can adversely affect adaptation to substantially different domains, causing significant performance degradation (see in Sec.~\ref{sec:method_dynamic_update}). While these issues exist in CSC, they are %significantly
amplified in the CDC setting 
%'s more dynamic environment 
where domains change rapidly and unpredictably.

Furthermore, the practical deployment of CTTA models demands computational and memory efficiency, particularly for edge devices. Yet existing approaches overlook this requirement. % in several ways. 
Some methods rely on computationally intensive teacher-student architectures and extensive data augmentation \cite{wang2022continual, liu2023vida}, while others introduce large numbers of additional parameters that require pre-adaptation warm-up using source data \cite{liu2023vida, lee2024becotta, song2023ecotta, gan2023decorate}. 

% \begin{wrapfigure}[18]{r}{0.65\textwidth}
\begin{figure*}[tb]
% \vspace{-0.6cm}
%\begin{figure}
    \centering
    \includegraphics[width=\linewidth]{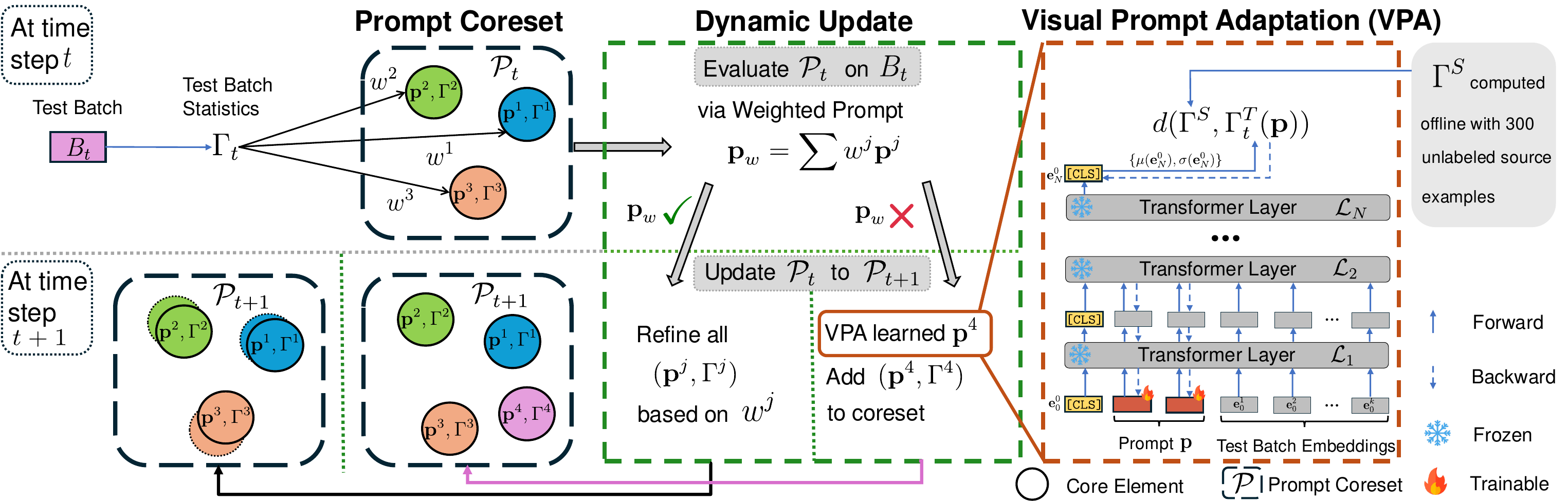}
    % \vspace{-.3cm}
    \caption{
    % Overview of DPCore. 
    % At time step $t$, we use the incoming test batch $B_t$ to compute a weighted prompt $\bm{p}_w$ using the distances between the statistics of the test batch ($\Gamma_t$) and core elements of the Prompt Coreset $\mathcal{P}_t$ (upper left) (which maintains learned prompts and statistics from previous domains as the core elements). If $\bm{p}_w$ performs well on the test batch, our Dynamic Update (middle) refines the existing core element using weights $w^j$; otherwise, it learns a new prompt using Visual Prompt Adaptation (right) by aligning the test batch with source statistics (computed offline from 300 examples) and adds the new prompt to the prompt coreset $\mathcal{P}_{t+1}$.
    Overview of DPCore with three key components. At time step $t$, Prompt Coreset $\mathcal{P}_t$ (upper left) maintains core elements consisting of learned prompts $\bm{p}^j$ and statistics $\Gamma^j$ from previous domains. Dynamic Update (middle) evaluates new test batch $B_t$ by computing a weighted prompt $\bm{p}_w$ based on distances between batch statistics $\Gamma_t$ and core elements $\Gamma^j$. If $\bm{p}_w$ performs well, existing core elements $\{\bm{p}^j, \Gamma^j\}$ are refined using weights $w^j$; otherwise, Visual Prompt Adaptation (right) learns a new prompt by aligning the test batch with source statistics (computed offline from 300 examples) and adds it to the prompt coreset. The updated coreset $\mathcal{P}_{t+1}$ is then used for the next time step $t+1$.
    }
    \label{fig:workflow}
    % \vspace{-3mm}
% \end{wrapfigure}
\end{figure*}

To address these challenges, we introduce \underline{\textbf{D}}ynamic \underline{\textbf{P}}rompt \underline{\textbf{Core}}set (\textbf{DPCore}), a novel CTTA method designed for robust performance across varying domain change patterns while maintaining computational efficiency. As shown in Fig.~\ref{fig:workflow}, DPCore combines three key components: (1) \emph{Visual Prompt Adaptation} that aligns source and target domains with minimal parameters and no warm-up requirements, (2) a \emph{Prompt Coreset} that preserves knowledge from previous domains while accommodating new ones, and (3) a \emph{Dynamic Update} mechanism that intelligently adjusts prompts based on domain similarities. This design enables efficient learning from homogeneous domain groups while preventing negative knowledge transfer between dissimilar domains. Notably, DPCore requires only 1\% of ViDA's parameters and merely 300 unlabeled source examples for computing statistics, eliminating the need for expensive warm-up procedures on the entire source dataset (see Table~\ref{table:computation} and Fig.~\ref{fig:ablation_num_src_data}).

Our extensive experiments across four CTTA benchmarks demonstrate DPCore's superiority. On ImageNet-to-ImageNet-C, DPCore achieves a +15.9\% improvement over the source model, surpassing ViDA by 3.5\% in the conventional CSC setting. Moreover, in the challenging CDC setting, 
DPCore maintains robust performance with an error rate of 42.1\%, outperforming ViDA by 10.0\%. This effectiveness extends to semantic segmentation, where DPCore improves mIoU scores over ViDA by 0.6\% and 1.8\% in CSC and CDC settings respectively.
Our contributions are summarized as follows:

\begin{itemize}
   \item We introduce a new CTTA setup—Continual Dynamic Change (CDC)—that better reflects real-world scenarios through frequent domain shifts and varying durations. We highlight that previous SOTA CTTA methods suffer significantly in CDC setup due to convergence issues, catastrophic forgetting, and negative transfer.

   \item We propose DPCore, a dynamic CTTA approach that effectively manages domain knowledge through a dynamic prompt coreset: preserving knowledge from seen domains, updating prompts for similar domains, and incorporating new knowledge for unseen domains.

   \item We show that DPCore is theoretically sound and achieves SOTA results for classification and segmentation tasks in both CSC and CDC settings while being computationally efficient.
   
\end{itemize}

\section{Related Works}
\label{sec:app_related_work}
\textbf{Test-Time Adaptation (TTA)} 
\cite{wang2021tent, NEURIPS2021_1415fe9f, liang2020we, gao2023visual, schneider2020improving, mirza2022norm, mirza2023actmad, niu2024test, liang2023ttasurvey, sun2020test, xiao2024modeladaptationtesttime, xiao2023energy, Mehra_2024_WACV} enhances pre-trained model performance using unlabeled data at test time, without access to the original training phase. TTA techniques fall into two main categories based on their use of source data. The first adjusts models through self-supervised losses like entropy minimization \cite{wang2021tent, niu2023towards} and consistency maximization \cite{wang2022continual, liu2023vida}. The second involves preliminary steps using source data: either by extracting source characteristics such as statistics or features \cite{mirza2023actmad, niu2024test, otvp_wacv25}) or by warming up injected parameters on source data before adaptation \cite{lee2024becotta, song2023ecotta, liu2023vida, gao2023visual}.

Crucially, both approaches are source-free adaptations since they operate without source data during adaptation. While our method falls into the second category, it significantly reduces source dependency compared to existing methods: DPCore requires only 300 unlabeled source examples to compute statistics, in contrast to DePT \cite{gao2023visual}, ViDA \cite{liu2023vida}, VDP \cite{gan2023decorate} and BeCoTTA \cite{lee2024becotta} which need the entire source dataset for warm-up. Moreover, as demonstrated in Fig.~\ref{fig:ablation_num_src_data}, our approach maintains effectiveness even when source data is completely unavailable.

\textbf{Continual Test-Time Adaptation (CTTA)} \cite{wang2022continual, niu2022efficient, niloy2024effective, liu2023deja, lame_cvpr22, rotta_cvpr23, hoang2024petta, press2023rdumb} tackles the challenge of continually changing target domains. This paradigm was first introduced by CoTTA \cite{wang2022continual}, which we refer to as Continual Structured Change (CSC), where domains change in a structured manner with clear boundaries and uniform lengths. While several subsequent works \cite{niu2023towards, lame_cvpr22, note_nips22, rotta_cvpr23, lee2024becotta} explore TTA challenges such as mixed domains, label imbalance,  temporal correlation, and gradual shifts, they still primarily focus on structured domain progressions or static domain scenarios. However, real-world environments often exhibit more dynamic patterns where domains recur with varying frequencies and durations—a critical aspect that existing setups often overlook. To address this gap, we introduce the Continual Dynamic Change (CDC) setting, where domains can appear multiple times with irregular durations, better reflecting realistic deployment scenarios.

The dynamic nature of CDC environments exposes fundamental limitations in current CTTA approaches (see Fig.~\ref{fig:imagenet_cdc_delta_1}). Most existing methods employ a single-model strategy, updating the same set of parameters across all encountered domains \cite{wang2022continual, liu2023vida, hoang2024petta, niu2022efficient}. While this approach works reasonably well in structured settings, it struggles significantly in dynamic environments due to three critical issues: convergence difficulties with brief domain exposures, catastrophic forgetting of previously learned knowledge, and negative transfer between dissimilar domains. Although some alternatives exist—such as PeTTA \cite{hoang2024petta}, which uses memory components, or RDumb \cite{press2023rdumb}, which employs periodic resets—these still fundamentally rely on single-model adaptation. In contrast, our DPCore addresses these limitations through a \emph{multi-modeling} strategy via a dynamic prompt coreset. This approach learns and manages distinct prompts for different domain groups, enabling DPCore to preserve and reuse knowledge for similar domains while isolating learning for novel ones, thus providing robust adaptation across the varied domain changes characteristic of CDC settings.

\section{Preliminaries and Problem Formulation}
In this section, we introduce Vision Transformers (ViTs) and formally define the CTTA problem across both CSC and our proposed CDC settings. 

\subsection{Vision Transformers (ViTs)}
We focus primarily on Vision Transformers (ViTs) due to their exceptional representation learning capabilities \cite{dosovitskiy2021an, liu2021swin}. 
A ViT \cite{dosovitskiy2021an, liu2021swin} $f$ with $N$ transformer layers $\{\mathcal{L}_i\}_{i=1}^N$ can be decomposed into a feature extractor $\phi: \mathcal{X} \rightarrow \mathcal{Z}$ with parameter $\theta_\phi$ and a classifier $h: \mathcal{Z} \rightarrow \mathcal{Y}$ with parameter $\theta_h$, such that $f=h \circ \phi$.
Let $\bm{E}_i=\{\bm{e}_i^j\}_{j=0}^k$ denote the input sequence to the $(i+1)$-th layer $\mathcal{L}_{i+1}$, where $k$ is the number of image patches and $\bm{e}_i^0$ represents the classification token $\texttt{[CLS]}$ from layer $\mathcal{L}_{i}$. The standard prediction process follows:
\begin{align}
    \bm{E}_i &= \phi(\bm{E}_{i-1}),\; i=1, ..., N  \,,\label{eq:vit_encoder_wo_prompt}\\
    \hat y &= h(\bm{e}_N^0) \,.\label{eq:vit_clf}
\end{align}

\subsection{CTTA Problem Formulation}
Given a model $f_\theta$ with parameters $\theta$ pre-trained on source domain $\mathcal{D}^S = (X^S, Y^S)$, the goal of Continual Test-Time Adaptation (CTTA) is to adapt this model to a sequence of unlabeled target domains $\{\mathcal{D}^{T_1}, \mathcal{D}^{T_2}, ..., \mathcal{D}^{T_M}\}$, where $M$ denotes the number of potential target domains and can be unknown or infinite in real-world scenarios. The model $f_\theta$ operates in an online setting where it processes a sequence of test data batches $\{B_t\}_{t=1}^{\infty}$, encountering one batch $B_t$ at each time step $t$. Following standard assumptions in the literature \cite{wang2022continual, versatile_cvpr24}, we consider all samples within a batch $B_t$ belong to the same target domain, though the domain identity remains unknown. At each time step $t$, CTTA aims to adapt the model parameters from $\theta_t$ to $\theta_{t+1}$ by learning from the current batch $B_t$, to improve prediction performance on future batches.

\subsection{Continual Dynamic Change: A new CTTA setup} 
The CTTA setting in previous works \cite{wang2022continual, liu2023vida, cmae_cvpr24} typically assumes that target domains change in a structured manner: each domain has uniform length and changes occur at regular intervals (Fig.~\ref{fig:settings} Top). However, real-world scenarios present more dynamic patterns where the same domain may appear multiple times with varying durations, (Fig.~\ref{fig:settings} Bottom). To systematically study this setting, we simulate dynamic environments using a Dirichlet distribution with parameter $\delta$. As demonstrated in Fig.~\ref{fig:various_delta}, smaller values of $\delta$ result in domain transitions more closely resembling the conventional CSC setting. As $\delta$ increases, domain changes become more frequent and unpredictable.
For our main experiments, we set $\delta=1$ to keep a balance between structured and completely random domain changes. Various $\delta$ and other distributions are discussed in Appendix~\ref{sec:app_additional_settings}.

\begin{figure*}[t]
    \centering
    \subfloat[Distance]{
      \begin{minipage}[b]{0.38\linewidth} 
        \centering  
        \includegraphics[width=\linewidth]{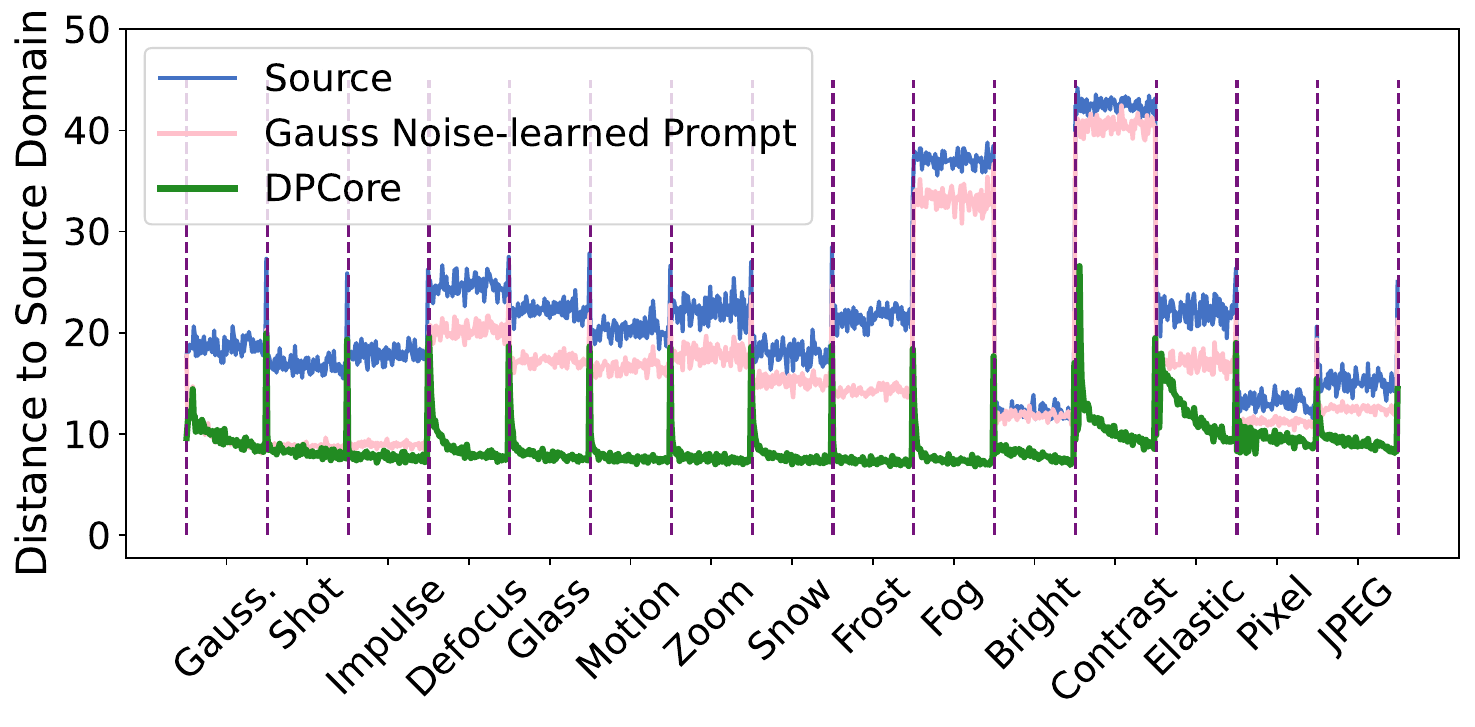} 
        \vspace{-4mm}
        \label{fig:distance_to_src}
      \end{minipage}
    }
    \subfloat[Growth of Coreset]{
      \begin{minipage}[b]{0.31\linewidth} 
        \centering  
        \includegraphics[width=\linewidth]{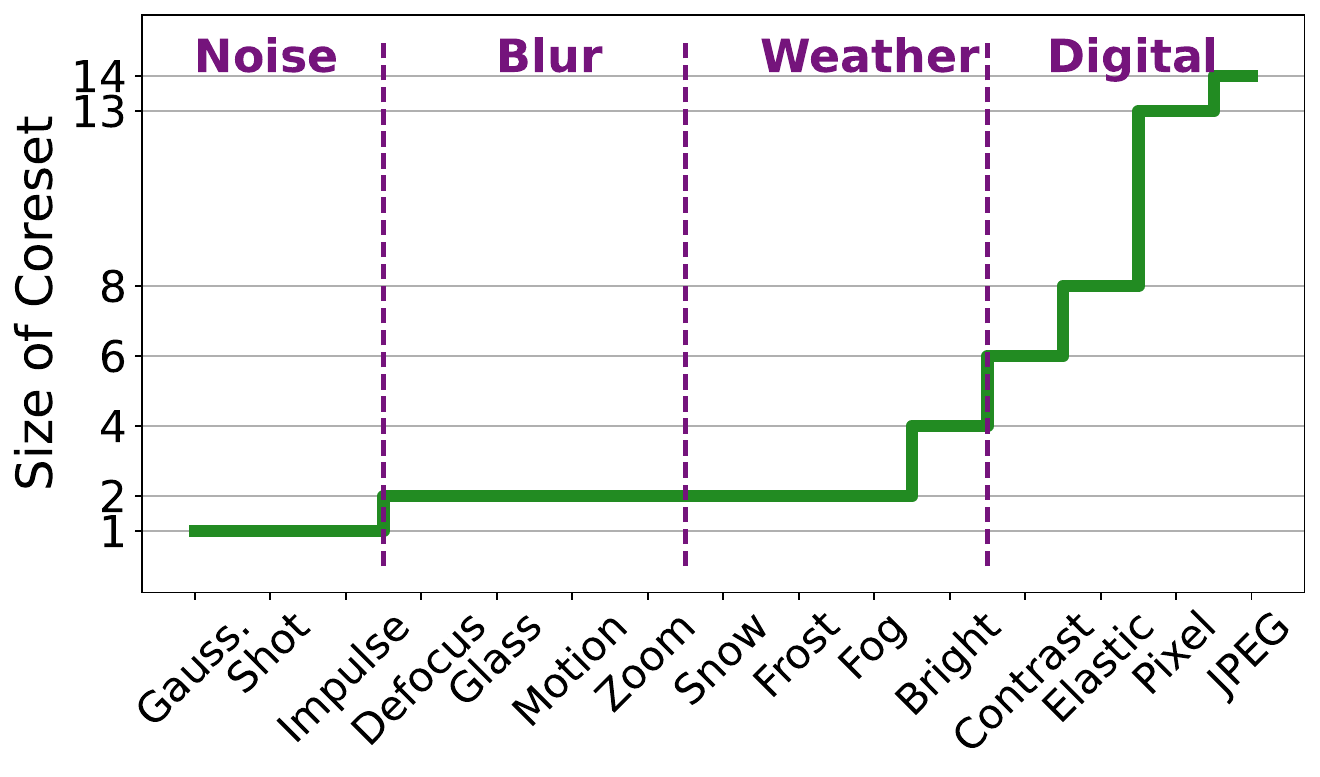} 
        \vspace{-4mm}
        \label{fig:coreset_growth}
      \end{minipage}%
    }
    \subfloat[Ten Different Orders]{
      \begin{minipage}[b]{0.31\linewidth} 
        \centering  
        \includegraphics[width=\linewidth]{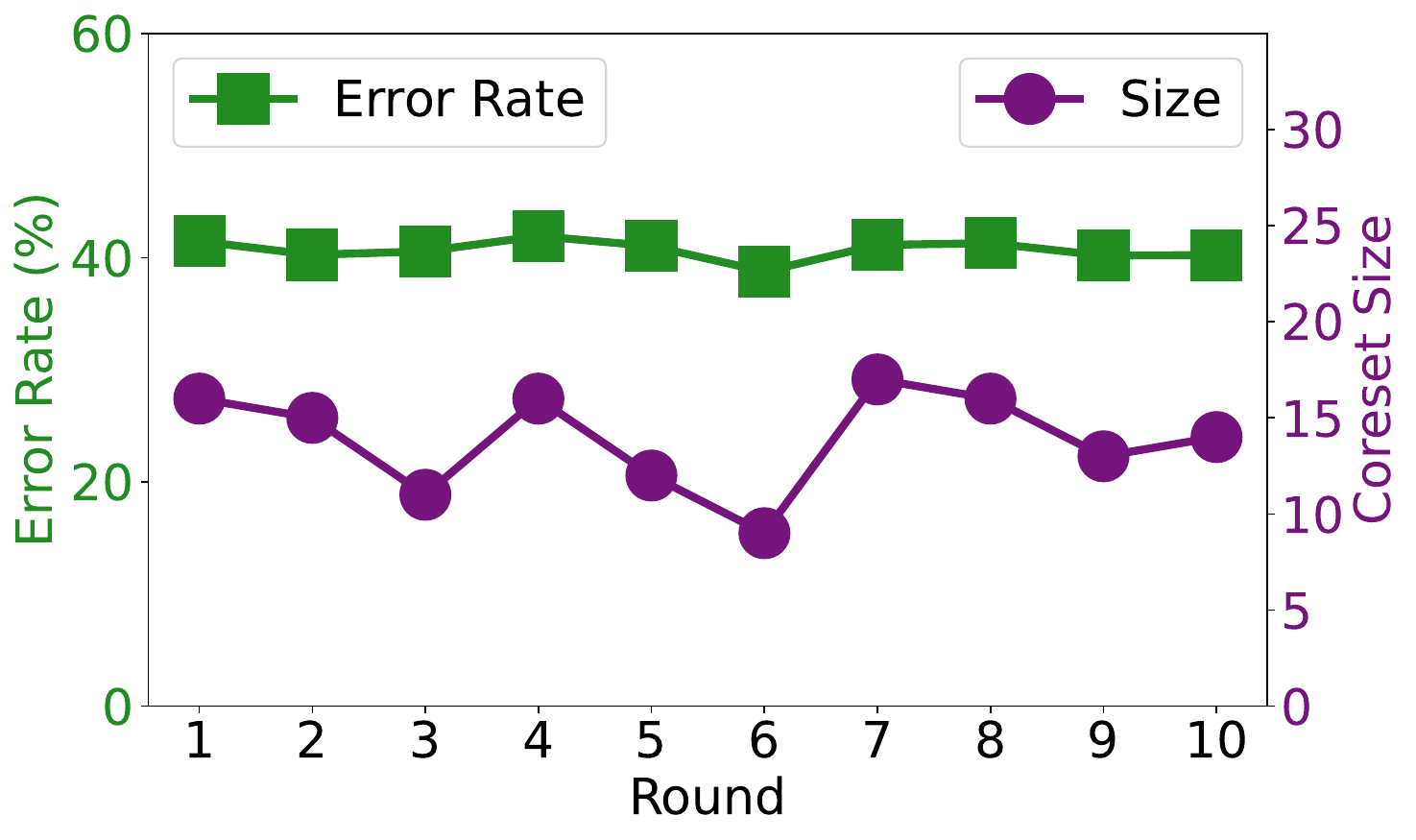}   
        \vspace{-4mm}
        \label{fig:10rounds_err_size}
      \end{minipage}
    }
    \caption{Analysis of DPCore on ImageNet-to-ImageNet-C in the CSC setting. (a) Distance between test batches and source domain in Eq.~\eqref{eq:online_distance} using source model (baseline), DPCore, and a prompt trained only on Gaussian Noise. DPCore consistently reduces domain gaps across all corruptions. (b) Evolution of coreset size across corruption types, showing strategic grouping within the four main corruption categories (Noise, Blur, Weather, Digital). (c) Performance stability across ten different domain orders.}
    % \caption{Analysis of DPCore on ImageNet-to-ImageNet-C in the CSC setting. (a) Distance comparison between test batches and source domain using three approaches: source model (baseline), DPCore, and a prompt trained only on Gaussian Noise. DPCore consistently reduces domain gaps across all corruptions, while the single-domain prompt shows limited generalization. (b) Evolution of coreset size across corruption types, showing strategic grouping within the four main corruption categories (Noise, Blur, Weather, Digital). DPCore efficiently manages similar domains while adapting to more complex ones. (c) Performance stability across ten different domain orders, demonstrating DPCore's robustness to sequence variations with consistent error rates and coreset sizes.}
\end{figure*}

\section{Method}
\label{sec:method_dpcore}
In this section, we present Dynamic Prompt Coreset (\textbf{DPCore}), illustrated in Fig.~\ref{fig:workflow}, consisting of three components: \emph{Visual Prompt Adaptation}, \emph{Prompt Coreset}, and \emph{Dynamic Update}. These components work together to effectively manage domain knowledge by preserving knowledge from previously visited domains, updating learned prompts when encountering similar domains, and incorporating new knowledge when facing new domains.

\subsection{Visual Prompt Adaptation (VPA)} 
We leverage visual prompts \cite{jia2022visual, ge2023domain} for efficient adaptation at test time 
by introducing $L$ learnable tokens $\bm{p}:=\{\texttt{[Prompt]}_i\}_{i=1}^L$, where $\texttt{[Prompt]} \in \mathbb{R}^{768}$ for ViT-Base models. These prompts augment the input sequence as $\bm{E}_i'=\{\bm{e}_i^0, \bm{p}, \bm{e}_i^1, ..., \bm{e}_i^k\}$, modifying Eq.~\eqref{eq:vit_encoder_wo_prompt} to:
\begin{align}
   \bm{E}_i' &= \phi(\bm{E}_{i-1}'),\; i=1, ..., N \,.\label{eq:vit_encoder_with_prompt}
\end{align}
As illustrated on the right of Fig.~\ref{fig:workflow}, we first extract source features $\mathcal{Z}^S=\phi(\mathcal{X}^S; \theta_\phi)$, i.e., $\bm{e}_N^0$, in a one-time offline computation before adaptation. For the current batch $B_t^T$ at time step $t$ from unknown target domain $T$, we initialize prompt tokens from a Gaussian distribution \cite{jia2022visual} without any warm-up. Denote the extracted features of the current batch with prompt $\bm{p}$ by $\mathcal{Z}^T_t(\bm{p}):=\phi(B_t^T; \theta_\phi, \bm{p})$. We align the statistics (mean and standard deviation) of $B_t^T$, $\Gamma^T_t(\bm{p}):=\{\bm{\mu}_t^T(\bm{p}), \bm{\sigma}^T_t(\bm{p})\}:= \{\bm{\mu}(\mathcal{Z}^T_t(\bm{p})), \bm{\sigma}(\mathcal{Z}^T_t(\bm{p}))\}$ with the source $\Gamma^S(\bm{p}):=\{\bm{\mu}^S, \bm{\sigma}^S\}:= \{\bm{\mu}(\mathcal{Z}^S), \bm{\sigma}(\mathcal{Z}^S)\}$ through the following distance
\begin{equation}
\label{eq:online_distance}
d(\Gamma^S, \Gamma^T_t(\bm{p}))=\|\bm{\mu}^S-\bm{\mu}^T_t(\bm{p})\|_2+\|\bm{\sigma}^S-\bm{\sigma}^T_t(\bm{p})\|_2\,.
\end{equation}
The prompt is then learned from scratch by minimizing this distance
\begin{equation}
\label{eq:prompt_learned_from_scratch}
\bm{p}^* = \mathop{\arg \min}\limits_{\bm{p}}\;d(\Gamma^S, \Gamma_t(\bm{p}))
\,,
\end{equation}
where we omit the superscript $T$ for simplicity, and the subscript $t$ denotes the test batch $B_t$.
Note that while this distance requires source data, labels are unnecessary since we only perform marginal distribution alignment. Our empirical results in Fig.~\ref{fig:ablation_num_src_data} show that merely 300 unlabeled source examples are enough to achieve stable performance.

\subsection{Prompt Coreset} 
To address catastrophic forgetting in continually changing environments \cite{niu2022efficient, kirkpatrick2017overcoming, li2017learning}, we introduce a Prompt Coreset mechanism inspired by Online K-Means \cite{duda1973pattern}. %This coreset effectively preserves knowledge acquired from previously visited domains.
The coreset is initialized as an empty set $\mathcal{P}_0$ as adaptation begins and undergoes adaptive updates at each time step $t$: $\mathcal{P}_t \rightarrow \mathcal{P}_{t+1}$. Each core element in the coreset consists of a learned prompt and its corresponding feature statistics, illustrated on the left of Fig.\ref{fig:workflow}. For example, when the first batch $B_1$ arrives, we begin by extracting features using the pre-trained model $f_\theta$ without any prompt, denoted as $\mathcal{Z}_1$, and compute its statistics $\Gamma_1=\{\bm{\mu}_1, \bm{\sigma}_1\}=\{\bm{\mu}(\mathcal{Z}_1), \bm{\sigma}(\mathcal{Z}_1)\}$. We use $\mathcal{Z}_1(\bm{p})$ and $\mathcal{Z}_1$ to distinguish features extracted with and without prompts respectively. Subsequently, we learn a prompt $\bm{p}_1$ from scratch for $B_1$ through Eq.~\eqref{eq:prompt_learned_from_scratch}. The learned prompt and its corresponding statistics form a pair $(\bm{p}_1, \Gamma_1)$ that is stored in the coreset as its first element. 
Unlike existing approaches such as \cite{l2p_cvpr22} that require maintaining a buffer of test data, our method stores only feature statistics, significantly reducing memory requirements while enhancing data privacy protection.

\subsection{Dynamic Update to the Prompt Coreset} 
\label{sec:method_dynamic_update}
\textbf{Motivations.} VPA has demonstrated effectiveness in single-domain TTA \cite{niu2024test, otvp_wacv25}. 
Our empirical studies in Appendix Table~\ref{tab:app_single_domain_promopt_results} show that prompts learned from one domain can significantly benefit adaptation to similar domains (e.g., a Gaussian Noise-learned prompt achieves 40.4\% error rate on Shot Noise, comparable to a Shot Noise-learned prompt's 39.9\%). However, this transferability is not universal as the same prompts can be ineffective or even harmful for substantially different domains (e.g., increasing error rate from 91.4\% to 95.7\% on Contrast). These findings suggest that existing prompts can be effective for similar domains while new ones should be learned for significantly different domains.
To achieve this, we employ a dynamic update strategy which first evaluates the current coreset on the test batch and updates the prompt coreset based on the evaluation result.

\textbf{Evaluating $\mathcal{P}_t$ on $B_t$ via Weighted Prompt.} We employ a distance-based method to evaluate each new batch after the first batch against existing core elements. Specifically, at time step $t>1$, given $K>0$ elements in the coreset $\{(\bm{p}^j, \Gamma^j)\}_{j=1}^K$, we first extract features $\mathcal{Z}_t$ from $B_t$ without any prompt, i.e., $\mathcal{Z}_t = \phi(B_t; \theta_\phi)$. and compute its statistics $\Gamma_t$. Here, we abuse the superscript to note the element in the coreset. Rather than evaluating each core element individually, which would require processing the same batch $K$ times, we employ a distance-based weighted prompt approach and process $B_t$ with the weighted prompt only once. DPCore computes distances $d(\Gamma_t, \Gamma^j)$ between the current batch statistics and all existing core elements, then generates a weighted composition of all $K$ prompts $\bm{p}_w$ as

\begin{equation}
\label{eq:weighted_prompt}
\scalebox{0.85}{$
\begin{aligned}
\bm{p}_w&:= \sum_{j=1}^K w^j \bm{p}^j, \\
\mathrm{where}\;\; w^j &= \frac{\exp(-d(\Gamma_t, \Gamma^j)/\tau)}{\sum_{l=1}^K\exp(-d(\Gamma_t, \Gamma^l)/\tau)}
\end{aligned}
$}
\end{equation}
and $\tau$ is a temperature parameter. The weighted prompt approach offers more flexibility than using the nearest neighbor. When a new domain shares characteristics with multiple visited domains, it can be effectively decomposed into a weighted combination of existing prompts, providing more accurate adaptation than forcing alignment with a single most similar domain. Notably, all statistics used here are based on features extracted without prompts, as the source pre-trained model can effectively evaluate sample similarity \cite{versatile_cvpr24} while requiring less computation.

\textbf{Updating $\mathcal{P}_t$ to $\mathcal{P}_{t+1}$.} We evaluate the effectiveness of $\bm{p}_w$ by comparing the distances without and with weighted prompts $d(\Gamma^S, \Gamma_t)$ and $d(\Gamma^S, \Gamma_t(\bm{p}_w))$. If the distance reduction is insufficient, indicating $B_t$ likely comes from a new domain, we learn a new prompt from scratch using Eq.~\eqref{eq:prompt_learned_from_scratch} and add the pair $(\Gamma_t, \bm{p}_t)$ to the coreset as a new element, following the same procedure used for the first batch $B_1$. 
% This could prevent negative knowledge transfer to new domains. 
Conversely, if the distance decreases significantly, indicating similarity to previously seen domains, we refine $\bm{p}_w$ with a single update step to obtain the final prompt $\bm{p}_t$ for $B_t$. The core elements are then adaptively updated based on their weights $w^j$ and an updating weight $\alpha$ as
\begin{equation}
    \label{eq:update_coreset}
    \scalebox{0.85}{$
    \bm{p}^j \gets \bm{p}^j + \alpha w^j (\bm{p}_t - \bm{p}^j), \;
    \Gamma^j \gets \Gamma^j + \alpha w^j (\Gamma_t - \Gamma^j).
    $}
\end{equation}
To ensure valid statistics, we use $\max \{\bm{0}, (\bm{\sigma}_t-\bm{\sigma}^i)\}$ in the update to prevent negative standard deviations. We compute a ratio of the two distances $\frac{d(\Gamma^S, \Gamma_t(\bm{p}_w))}{d(\Gamma^S, \Gamma_t)}$ and compare it with a predefined threshold ratio $\rho$ to evaluate whether a batch represents a potential new domain. The ratio is used for applying a domain-dependent distance threshold. This approach allows the coreset to grow dynamically based on the complexity of the domain space, demonstrated in Fig.~\ref{fig:coreset_growth}. Given that the total number of domains is typically unknown in real-world scenarios, we do not impose strict constraints on the coreset size, allowing it to adapt naturally to the complexity of the domain space.

The strength of our algorithm lies in two key aspects. First, it learns prompts from homogeneous domain groups rather than individual batches, enabling more comprehensive adaptation. This ability is particularly valuable when domains change rapidly with short durations—where previous CTTA methods might struggle with convergence, our method can effectively learn and utilize prompts even across discontinuous domain appearances. Second, it mitigates error accumulation by avoiding updates to existing prompts when encountering significantly different domains. The complete algorithm is presented in Appendix Alg.~\ref{alg:main_algo}.

\begin{figure}[t]
    \centering
    \subfloat[Different $\delta$]{
      \begin{minipage}[b]{0.48\linewidth} 
        \centering  
        \includegraphics[width=\linewidth]{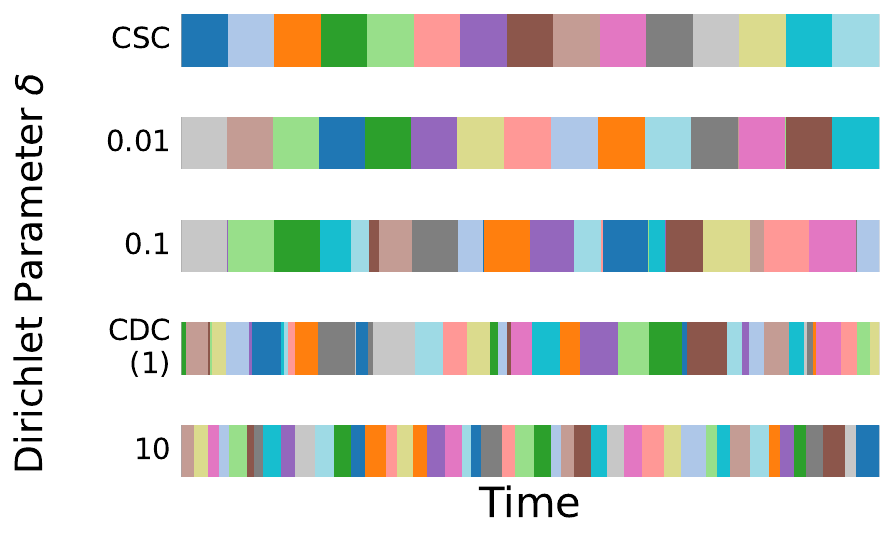} 
        % \vspace{-4mm}
        \label{fig:various_delta}
      \end{minipage}
    }
    \subfloat[Error Rate in CDC]{
      \begin{minipage}[b]{0.48\linewidth} 
        \centering  
        \includegraphics[width=\linewidth]{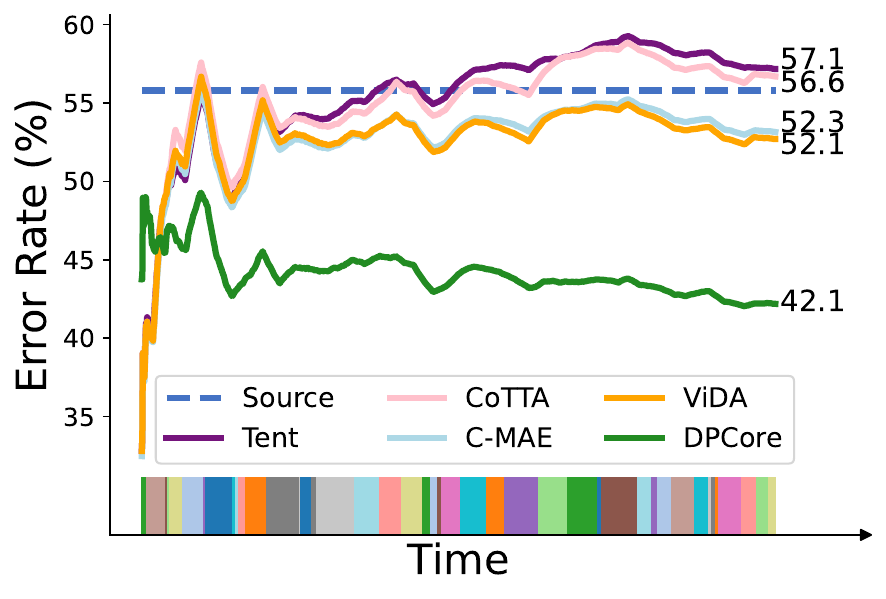} 
        % \vspace{-4mm}
        \label{fig:imagenet_cdc_delta_1}
      \end{minipage}%
    }
    \caption{Analysis of CDC setting on ImageNet-to-ImageNet-C. (a) Domain change patterns with varying Dirichlet distribution parameter $\delta$ (colors represent different domains): smaller $\delta$ produces CSC-like structured changes while larger $\delta$ leads to more frequent and unpredictable transitions. (b) Performance comparison in CDC setting ($\delta=1$), where previous methods show significant degradation while DPCore maintains robust performance.}
    % \vspace{-5mm}
\end{figure}

\subsection{Theoretical Analysis} \label{sec:analysis}
We further provide theoretical insight into why DPCore maintains consistent performance across domain changes. Inspired by online K-Means \cite{duda1973pattern}, we analyze a simplified version of DPCore where batches naturally cluster into distinct groups based on their distances. Under the assumption that these clusters are well-separated, we prove three key propositions: 1) (Prop~\ref{prop1}) DPCore correctly assigns batches to their respective clusters regardless of the sequence length, 2) (Prop~\ref{prop2}) these assignments remain correct regardless of the order in which batches arrive, and 3) (Prop~\ref{prop3})  the learned prompts are independent of batch order. Together, these properties demonstrate that DPCore can effectively manage domain knowledge even when domains appear in arbitrary orders with varying frequencies—a crucial advantage in dynamic environments. Detailed analysis is provided in Appendix~\ref{sec:app_proof}.

\begin{table*}[ht]
% \tiny
\caption{Classification error rate (\%) for ImageNet-to-ImageNet-C in previous CSC setting, evaluated on ViT-Base with corruption severity level 5. \textbf{Bold} and \underline{underline} indicate best and second-best performance respectively.}
% \vspace{-.3cm}
\label{tab:imagenetc_csc}
\centering
\renewcommand{\arraystretch}{0.9}
\setlength{\tabcolsep}{2.5pt}
\resizebox{1.0\textwidth}{!}{%

\begin{tabular}{c|ccc|cccc|cccc|cccc|cc}
\toprule
 \textbf{Algorithm} & \textbf{Gauss.} & \textbf{Shot} & \textbf{Impulse} & \textbf{Defocus} & \textbf{Glass} & \textbf{Motion} & \textbf{Zoom} & \textbf{Snow} & \textbf{Frost} & \textbf{Fog} & \textbf{Bright} & \textbf{Contrast} & \textbf{Elastic} & \textbf{Pixel} & \textbf{JPEG} & \textbf{Mean$\downarrow$} & \textbf{Gain$\uparrow$}\\
\midrule
Source&53.0&51.8&52.1&68.5&78.8&58.5&63.3&49.9&54.2&57.7&26.4&91.4&57.5&38.0&36.2&55.8&0.0 \\

Pseudo \cite{Leeetal2013}&45.2&40.4&41.6&51.3&53.9&45.6&47.7&40.4&45.7&93.8&98.5&99.9&99.9&98.9&99.6&61.2&-5.4\\

Tent \cite{wang2022continual}&52.2&48.9&49.2&65.8&73.0&54.5&58.4&44.0&47.7&50.3&23.9&72.8&55.7&34.4&33.9&51.0&+4.8 \\

CoTTA \cite{wang2022continual}&52.9&51.6&51.4&68.3&78.1&57.1&62.0&48.2&52.7&55.3&25.9&90.0&56.4&36.4&35.2&54.8&+1.0\\

VDP \cite{gan2023decorate}&52.7&51.6&50.1&58.1&70.2&56.1&58.1&42.1&46.1&45.8&23.6&70.4&54.9&34.5&36.1&50.0&+5.8\\

SAR \cite{niu2023towards}&\underline{45.8} & 45.9 & 47.7 & 52.3 & 63.7 & 46.2 & 50.9 & 40.3 & 42.4 & 41.8 & 24.4 & 53.4 & 53.6 & 38.4 & 36.6 & 45.6 & +10.2\\

RoTTA \cite{rotta_cvpr23} & 51.5 & 50.3 & 51.7 & 60.4 & 58.7 & 52.6 & 54.8 & 47.2 & 43.5 & 42.8 & 25.9 & 49.1 & 48.8 & 46.3 & 39.7 & 48.2 & +7.6 \\

RDumb \cite{press2023rdumb} & 46.4 & 42.1 & 43.7 & 56.1 & 53.1 & 47.2 & 49.7 & 43.0 & 41.1 & 46.8 & 27.7 & \underline{52.3} & 49.9 & 35.3 & 34.6 & 44.6 & +11.2 \\

EcoTTA \cite{song2023ecotta} & 48.1 & 45.6 & 46.3 & 56.5 & 67.1 & 50.4 & 57.1 & 41.3 & 44.5 & 43.8 & 24.1 & 71.6 & 54.8 & 34.1 & 34.8 & 48.0 & +7.8\\

ViDA \cite{liu2023vida} & 47.7 & 42.5 & 42.9 & 52.2 & 56.9 & 45.5 & 48.9 & 38.9 & 42.7 & 40.7 & 24.3 & 52.8 & 49.1 & 33.5 & 33.1 & 43.4 & +12.4\\

BGD \cite{versatile_cvpr24} & 47.5 & 42.1 & \underline{41.6} & 55.5 & 55.4 & \underline{44.5} & 47.9 & \underline{38.8} & 37.8 & \underline{39.6} & 23.6 & 57.0 & 44.4 & 33.5 & 32.3 & 42.8 & +13.0 \\

C-MAE \cite{cmae_cvpr24} & 46.3 & \underline{41.9} & 42.5 & \underline{51.4} & \underline{54.9} & \textbf{43.3} & \textbf{40.7} & \textbf{34.2} & \textbf{35.8} & 64.3 & \underline{23.4} & 60.3 & \textbf{37.5} & \textbf{29.2} & \underline{31.4} & \underline{42.5} & \underline{+13.3} \\

\rowcolor{lightgray} 
\textbf{DPCore (Proposed)} & \textbf{42.2} & \textbf{38.7} & \textbf{39.3} & \textbf{47.2} & \textbf{51.4} & 47.7 & \underline{46.9} & 39.3 & \underline{36.9} & \textbf{37.4} & \textbf{22.0} & \textbf{44.4} & \underline{45.1} & \underline{30.9} & \textbf{29.6} & \textbf{39.9} & \textbf{+15.9} \\
\bottomrule
\end{tabular}
}

\end{table*}

\section{Experiments}
In this section, we evaluate the effectiveness of DPCore on classification and segmentation tasks across CSC and CDC settings. Our evaluation focuses on three aspects: 1) its ability to handle (dynamically) changing domains, 2) the growth of coreset, and 3) its hyperparameter sensitivity.

\subsection{Experimental Setup}
\textbf{Datasets.} 
We evaluate our proposed method on three classification datasets: ImageNet-to-ImageNet-C, CIFAR100-to-CIFAR100C, and CIFAR10-to-CIFAR10C \cite{hendrycks2019benchmarking}. Each dataset comprises corrupted images in 15 types of corruption, categorized into four main groups: Noise, Blur, Weather, and Digital. Our experiments use the highest level of corruption (i.e., severity 5). Moreover, we validate DPCore on a CTTA segmentation task: Cityscapes-ACDC \cite{Sakaridis2021ACDCTA, Cordts2016TheCD}.

\textbf{Methods Compared.} 
We compare our method against strong CTTA baselines including Tent \cite{wang2021tent}, SAR \cite{niu2023towards}, CoTTA \cite{wang2022continual}, VDP \cite{gan2023decorate}, DePT \cite{gao2023visual}, RoTTA \cite{rotta_cvpr23}, EcoTTA \cite{song2023ecotta}, ViDA \cite{liu2023vida}, BDG \cite{versatile_cvpr24}, and C-MAE \cite{cmae_cvpr24}.
Following their protocols, we conduct several warm-up iterations on labeled source data (e.g., ImageNet) for VDP, DePT, ViDA, and EcoTTA to ensure optimal performance. 
Implementation details are in Appendix~\ref{sec:app_baseline}.

\textbf{CSC and CDC Settings.} 
We evaluate DPCore across the CSC and CDC settings. The CSC setting follows the implementation by \cite{wang2022continual}. In the experiment under CDC setting, we employ Dirichlet Distribution with $\delta=1$ to generate the data stream from all domains dynamically. Other distributions are discussed in Appendix~\ref{sec:app_additional_settings}.

\textbf{Implementation Details.}
To ensure consistency and comparability in our experiments, we follow the implementation details specified in CoTTA \cite{wang2022continual} and ViDA \cite{liu2023vida}. For classification tasks, we use the ViT-Base model \cite{rw2019timm}, while for segmentation tasks, we employ the pre-trained Segformer-B5 model \cite{xie2021segformer}.
Following the protocol established in \cite{zhang2022memo}, we determine hyperparameters using the four disjoint validation corruptions provided with ImageNet-C and CIFAR10-C \cite{hendrycks2019benchmarking}. We utilize the AdamW optimizer \cite{loshchilov2017decoupled, kingma2014adam} with a learning rate of 0.01, and select the number of prompt tokens at $L=8$ and the threshold at $\rho=0.8$. The source statistics are computed using 300 examples from the corresponding source domain. For other datasets that lack validation sets, we apply these same hyperparameters without additional tuning to align with practical testing conditions, where hyperparameters are selected before accessing any target data. More details of DPCore's implementation are available in Appendix~\ref{sec:app_dpcore_implementation}. We ensure a fair comparison by maintaining the same hyperparameters across both the CSC and CDC settings for all methods.

\subsection{Main Results}
We present experimental results on ImageNet-to-ImageNet-C and Cityscapes-to-ACDC across both CSC and CDC settings. \textbf{Results for CIFAR10/100-to-CIFAR10/100 are provided in Appendix~\ref{sec:app_cifar}} due to space limit.

\textbf{ImageNet-to-ImageNet-C in CSC.} We first evaluated DPCore in the conventional CSC setting, where 15 types of corruption occur sequentially, from Gaussian Noise to JPEG compression during test time. Results, as shown in Table \ref{tab:imagenetc_csc}, indicate that DPCore achieves a SOTA improvement of +15.9\% over the source model, surpassing the second-best method, C-MAE, by 2.6\%. The coreset's evolution during this process, illustrated in Fig.~\ref{fig:coreset_growth}, reveals that DPCore does not treat each corruption type independently but rather groups similar corruptions to optimize adaptation. For instance, examining the Noise group (Gaussian, Shot, Impulse) in Fig.~\ref{fig:distance_to_src}, we observe that these domains share similar distances to the source domain. Recognizing this similarity, DPCore efficiently handles the entire group with a single core element. In addition, DPCore shows sophistication in handling domains that exhibit significant differences. A notable example occurs in the Weather group: when transitioning from Fog to Brightness, the distance metrics change dramatically despite both corruptions belonging to the same category. In response, DPCore adapts by learning two separate core elements for Brightness, leading to superior performance compared to other baselines on the Brightness domain. Overall, DPCore maintains efficiency by limiting the total number of core elements to 14 and demonstrates remarkable flexibility in distributing these elements across corruption types. This distribution is strategically varied based on both the complexity of individual domains and their similarities to other corruptions, reflecting a sophisticated approach to domain management. 

To further validate DPCore's robustness, we conducted additional experiments using 10 random domain orders, following the evaluation protocol of CoTTA~\cite{wang2022continual}. DPCore demonstrates consistent performance, achieving an average error rate of 40.2\% with an average coreset size of 13.9 elements in Fig.~\ref{fig:10rounds_err_size}. Details are available in Sec.~\ref{sec:app_10order_csc}.

\textbf{ImageNet-to-ImageNet-C in CDC.}
We benchmark CTTA methods in the more challenging CDC setting. Results in Fig.~\ref{fig:imagenet_cdc_delta_1} reveal significant challenges for existing CTTA methods in this scenario. Although these methods perform well in CSC, their gains diminish substantially in the more realistic CDC environment. Strikingly, previous SOTA methods suffer severe degradation: ViDA's error rate increases dramatically from 43.4\% to 52.1\%, while Tent not only loses its advantage but performs worse than the source model, with error rates increasing from 51.0\% to 57.1\%, compared to the source model's 55.8\%. These substantial performance drops highlight the limitations of existing methods when handling varying domain lengths and rapid domain shifts.% in the CDC setting.

In contrast, DPCore shows remarkable resilience, experiencing only a modest increase in error rate from 39.9\% to 42.1\%, outperforming the second-best method by a significant margin of 10.0\%. To handle the dynamic domain changes, DPCore adapts its coreset size from 14 to 25 elements between CSC and CDC settings. This allows DPCore to maintain performance stability across diverse scenarios. % and underscores DPCore's practical utility in real-world applications.

\begin{table}
\caption{Average mIoU score (\%) for Cityscapes-to-ACDC across both CSC and CDC settings. The same target domains are repeated three rounds. Full results in Appendix~\ref{sec:app_acdc}.}
\label{tab:acdc_csc_cdc}
\scriptsize
\centering
\renewcommand{\arraystretch}{0.8}
\setlength{\tabcolsep}{4.7pt}
\begin{tabular}{ccccccc}
\toprule
\textbf{Setting} & \textbf{Algorithm} & \textbf{Round 1} & \textbf{Round 2} & \textbf{Round 3} & \textbf{Mean}$\uparrow$ & \textbf{Gain}$\uparrow$ \\ 
\midrule\vspace{-2pt}
& Source  & 56.7 & 56.7 & 56.7 & 56.7 & 0.0 \\
\midrule
\multirow{6}{*}{CSC} & Tent & 56.7 & 55.9 & 55.0 & 55.7 & -1.0 \\
& CoTTA & 58.6 & 58.6 & 58.6 & 58.6 & +1.9 \\
& EcoTTA & 56.0 & 55.9 & 55.8 & 55.8 & -0.9 \\
& ViDA & 61.1 & \underline{62.2} & \underline{62.3} & \underline{61.9} & \underline{+5.2} \\
& C-MAE & \underline{61.8} & 61.6 & 62.0 & 61.8 & +5.1 \\
& \cellcolor{lightgray}\textbf{DPCore} & \cellcolor{lightgray}\textbf{62.1} & \cellcolor{lightgray}\textbf{62.6} & \cellcolor{lightgray}\textbf{62.8} & \cellcolor{lightgray}\textbf{62.5} & \cellcolor{lightgray}\textbf{+5.8} \\
\midrule
\multirow{6}{*}{CDC} & Tent & 56.6 & 54.8 & 53.4 & 54.9 & -1.8 \\
& CoTTA & 58.0 & 57.3 & 56.5 & 57.3 & +0.5 \\
& EcoTTA & 55.6 & 55.3 & 55.1 & 55.3 & -1.4 \\
& ViDA & \underline{60.1} & \underline{59.1} & 58.6 & \underline{59.2} & \underline{+2.5} \\
& C-MAE & 59.6 & 58.8 & \underline{60.0} & 58.7 & +2.0 \\
& \cellcolor{lightgray}\textbf{DPCore} & \cellcolor{lightgray}\textbf{61.4} & \cellcolor{lightgray}\textbf{60.8} & \cellcolor{lightgray}\textbf{60.7} & \cellcolor{lightgray}\textbf{61.0} & \cellcolor{lightgray}\textbf{+4.3} \\
\bottomrule
\end{tabular}
% \vspace{-4mm}
\end{table}
\textbf{Cityscapes-to-ACDC in CSC and CDC.}
Beyond classification, we evaluate DPCore on real-world semantic segmentation using the Cityscapes-to-ACDC dataset, where performance is measured by mean Intersection over Union (mIoU, \%). As shown in Table \ref{tab:acdc_csc_cdc}, in the CSC setting following \cite{liu2023vida, cmae_cvpr24} where domain sequences repeat three times, DPCore shows consistent improvement in mIoU across cycles (62.1 $\rightarrow$ 62.6 $\rightarrow$ 62.8), outperforming previous SOTA ViDA by 0.6\%. In the more challenging CDC setting, while Tent and EcoTTA perform worse than the source model and ViDA's improvement drops from +5.2\% to +2.5\%, DPCore maintains robust performance (from +5.8\% to +4.3\%) while adapting its coreset size from 5 elements in CSC to 7 in CDC to handle increased domain complexity. Fine-grained results are available in Appendix~\ref{sec:app_acdc}.

\begin{figure*}[t]
    \centering
    \subfloat[Prompt Length $L$]{
      \begin{minipage}[b]{0.24\linewidth} 
        \centering  
        \includegraphics[width=\linewidth]{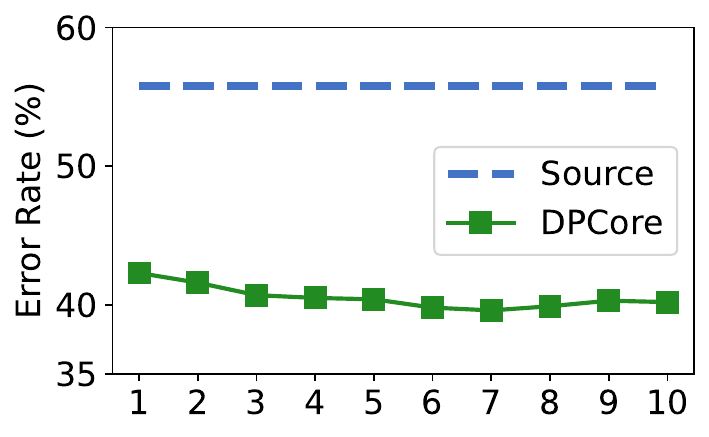} 
        \label{fig:ablation_prompt_length}
      \end{minipage}
    }
    \subfloat[Threshold Ratio $\rho$]{
      \begin{minipage}[b]{0.24\linewidth} 
        \centering  
        \includegraphics[width=\linewidth]{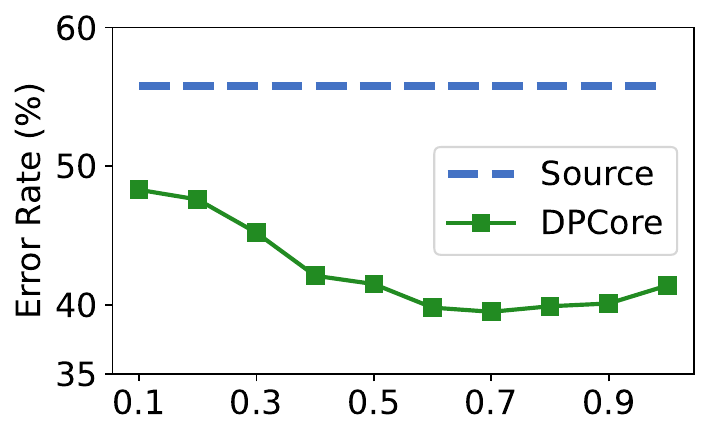} 
        \label{fig:ablation_threshold}
      \end{minipage}%
    }
    \subfloat[Number of Source Examples]{
      \begin{minipage}[b]{0.24\linewidth} 
        \centering  
        \includegraphics[width=\linewidth]{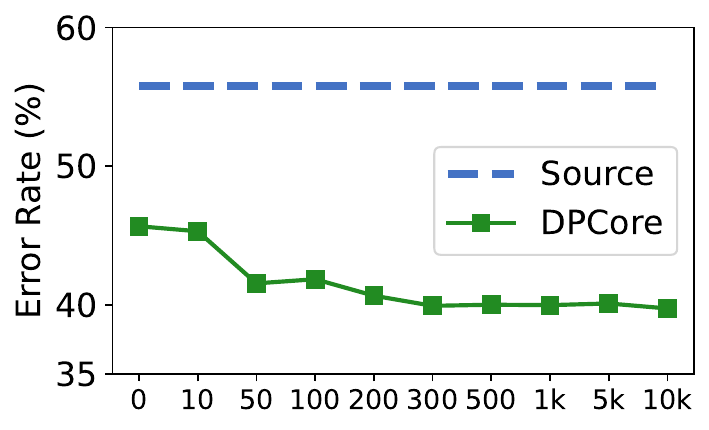}   
        \label{fig:ablation_num_src_data}
      \end{minipage}
    }
    \subfloat[Batch Size]{
      \begin{minipage}[b]{0.24\linewidth} 
        \centering  
        \includegraphics[width=\linewidth]{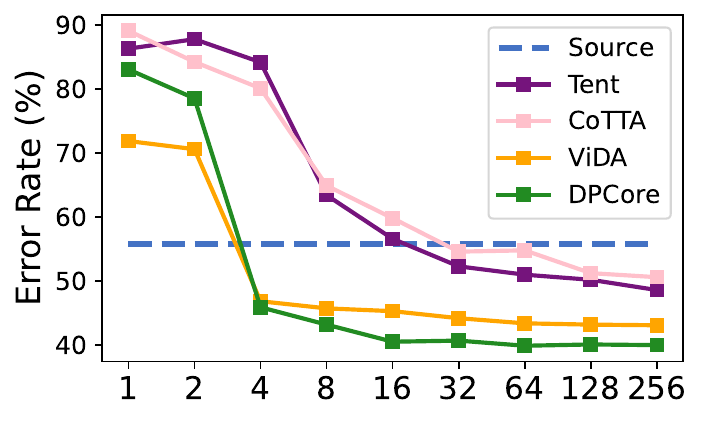}   
        \label{fig:ablation_batch_size}
      \end{minipage}
    }
    \caption{Ablation studies on ImageNet-to-ImageNet-C: impact of (a) prompt token length, (b) threshold ratio for core element evaluation, (c) number of source examples used for statistics computation, and (d) batch size on classification error rate.}
\end{figure*}

\subsection{Ablation Studies}
We provide comprehensive analyses and ablation studies for DPCore on ImageNet-to-ImageNet-C in the CSC setting. Additional results are available in Appendix~\ref{sec:app_additional_ablation}.

\textbf{Effect of Each Component.} Table~\ref{tab:component_analysis} evaluates the contributions of DPCore's key components: Visual Prompt Adaptation (VPA), Prompt Coreset (PC), and Dynamic Update (DU). In the first experiment (Exp-1), employing only VPA and DU without PC, we see a 4.8\% reduction in error rate compared to the source pre-trained model, with computation time similar to Tent as it involves learning a single prompt from scratch for all domains. Exp-2 introduces PC but omits DU, requiring new prompts for each batch, enhancing performance to +7.5\% but increasing computation time tenfold compared to Tent. In Exp-3, we replace VPA with NormLayer parameters while retaining PC and DU, outperforming Tent by +5.9\%, illustrating the method's adaptability beyond ViT architectures to CNNs by adapting NormLayer parameters. The full DPCore setup, integrating VPA, PC, and DU, achieves a SOTA improvement of +15.9\%  while maintaining computational efficiency. 

\textbf{Effect of prompt length $L$.}
We assess the impact of prompt size by varying the number of prompt tokens $L$ within the range of $\{1, 2, ..., 10\}$. As depicted in Fig.~\ref{fig:ablation_prompt_length}, DPCore's performance exhibits strong stability across different values of $L$, demonstrating low sensitivity to this parameter. We fix $L=8$ for all main experiments. 
\begin{table}[t]
\centering
\caption{Effect of DPCore's three components: VPA (Visual Prompt Adaptation), PC (Prompt Coreset), and DU (Dynamic Update). Time shows relative computation time (Tent=1.0), Err Mean shows classification error rate (\%).}
\label{tab:component_analysis}
\scriptsize
\centering
\renewcommand{\arraystretch}{0.9}
\setlength{\tabcolsep}{8pt}
\begin{tabular}{cccccccc}
\toprule
& \textbf{VPA} & \textbf{PC} & \textbf{DU} & \textbf{Time} & \textbf{Err Mean$\downarrow$} &\textbf{Gain$\uparrow$} \\ 
\midrule
Source & - & - & - & - & 55.8 & - \\
Tent & - & - & - & 1.0 & 51.0 & +4.8 \\
Exp-1 & \checkmark & - & \checkmark & 1.0 & 50.8 & +5.0 \\
Exp-2 & \checkmark & \checkmark & - & 11.4 & 48.3 & +7.5 \\
Exp-3 & - & \checkmark & \checkmark & 1.6 & 45.1 & +10.7 \\
\rowcolor{lightgray} DPCore & \checkmark & \checkmark & \checkmark & 1.8 & \textbf{39.9} & \textbf{+15.9} \\
\bottomrule
\end{tabular}
% \vspace{-3mm}
\end{table}
\textbf{DPCore's sensitivity to threshold $\rho$.}
The threshold $\rho$ in Alg.~\ref{alg:main_algo} controls DPCore’s sensitivity to domain changes. Lower $\rho$ values result in more batches being classified as samples from unseen domains. For example, setting $\rho$ near zero causes each batch to learn its own prompt. In contrast, high $\rho$ values lead to all batches after the first one being considered as from visited domains, potentially reducing the coreset to a single element. However, even a high ratio $\rho$ such as 1 is likely to learn a new core element if the weighted prompt $\bm{p}_w$ increases the domain gap significantly for different unseen domains, i.e., $d(\Gamma^S, \Gamma_t(\bm{p}_w)) > d(\Gamma^S, \Gamma_t)$. We examine $\rho$ within a range of 0.1 to 1.0. Experimental results, as shown in Fig. \ref{fig:ablation_threshold}, demonstrate stable performance for $\rho$ values from 0.6 to 0.9. To ensure consistency and avoid dataset-specific tuning, we fix $\rho=0.8$ across all our main experiments before any exposure to target data.

\textbf{Effect of test batch size.} 
To comprehensively assess the impact of test batch size, we evaluate various CTTA methods across batch sizes ranging from 1 to 256. The results illustrated in Fig.~\ref{fig:ablation_batch_size} reveal a consistent pattern across methods: regardless of their objective functions (entropy, consistency, or distribution alignment), they maintain stable performance with sufficiently large batch sizes but exhibit increased error rates as batch sizes decrease. In the extreme case of single-sample adaptation (batch size one), all methods demonstrate significantly degraded performance, falling below the source model's baseline. This observation aligns with findings reported in recent literature \cite{rotta_cvpr23, song2023ecotta}. Based on these empirical findings and consistent with prior work, we standardize the batch size to 64 for fair comparison across methods.
Moreover, DPCore maintains exceptional stability for batch sizes above 16 and shows minimal degradation for smaller sizes (8 and 4) while outperforming other methods. For extremely small batch sizes (e.g. 1), we introduce DPCore-B, which uses a buffer zone to accumulate samples before updates. With a buffer size of 64, DPCore-B achieves a 41.2\% error rate even with single-sample batches, demonstrating its practical utility (details in Appendix~\ref{sec:app_small_batch_size}).

\textbf{Effect of number of source examples.} We evaluate DPCore's sensitivity to source data volume by varying the number of source examples used for computing statistics from 0 to 10k. As shown in Fig.~\ref{fig:ablation_num_src_data}, DPCore maintains effective adaptation even with 50 examples. For all main experiments, we randomly select 300 unlabeled source examples, notably different from methods like VDP, DePT, EcoTTA, and ViDA that require the entire labeled source dataset for parameter initialization. We further explore an extreme scenario where source data is completely inaccessible (shown at source data = 0 in Fig.~\ref{fig:ablation_num_src_data}). For ImageNet-C experiments, using 300 unlabeled STL10 \cite{stl10} images as reference still achieves an error rate of 45.6\%, improving over the source model by 10.2\%. Details are provided in Appendix~\ref{sec:app_num_src_data}.

\begin{table}[t]
\caption{Computational analysis on ImageNet-to-ImageNet-C. \#Param. shows learnable parameters (Extra Param. indicates injected parameters beyond model), \#FP/\#BP shows propagation counts, Time shows relative computation (Tent=1.0), and Err Mean shows classification error rate (\%).}
\label{table:computation}
\scriptsize
\centering
\setlength{\tabcolsep}{5pt}
\renewcommand{\arraystretch}{0.9}
\begin{tabular}{crcrrcc}
\toprule
\textbf{Algo.} & \textbf{\#Param.} & \textbf{Extra Param.} & \textbf{\#FP} & \textbf{\#BP} & \textbf{Time} & \textbf{Err. Mean$\downarrow$}  \\ \midrule
Tent & 0.03M & &1 & 1 & 1.0 & 51.0 \\ 

CoTTA & 86.57M & &11.7 & 1 & 3.6 & 54.8  \\ 

VDP  & 1800 & \checkmark&2 & 1 & 1.5 & 50.0\\

EcoTTA &3.46M & \checkmark&1 & 1 & 1.9 & 48.0 \\
ViDA & 93.70M & \checkmark&11 & 1 & 2.8&43.4 \\

\rowcolor{lightgray} 
\textbf{Ours} & \textbf{0.08M} & \checkmark& \textbf{3.1} & \textbf{1.1} & \textbf{1.8} & \textbf{39.9} \\ 
\bottomrule
\end{tabular}
\end{table}
\textbf{DPCore's computation and memory efficiency.} We analyze computational complexity across methods in Table~\ref{table:computation}, comparing learnable parameters, forward/backward propagation counts, and relative computation time (normalized to Tent). DPCore achieves remarkable efficiency by introducing only 0.08M parameters (0.1\% of model parameters), whereas ViDA requires 93.70M parameters (7.13M additional parameters, 89× more) and 55.6\% more computation time. While this analysis primarily focuses on test-time computation,  DPCore also maintains efficiency in pre-adaptation: unlike previous methods which require extensive source data warm-up (e.g., EcoTTA needs 3 epochs of ImageNet training for parameter initialization), DPCore requires minimal preparation, involving only forwarding 300 source examples and computing their statistics. This shows DPCore's efficiency in both the preparation and adaptation phases. Detailed analysis is in Appendix~\ref{sec:app_computation}.

\textbf{Fixed-Size Coreset Management.} We explore fixed-size coreset strategies for scenarios with strict memory constraints. We evaluate two approaches to maintain a fixed-size coreset on ImageNet-C with K=15: once the number of prompts exceeds K, we either 1) discard the oldest prompt or 2) merge the most similar prompts by averaging those with the smallest statistics distance. As shown in Table~\ref{tab:memory_management}, both strategies significantly improve upon the source model, with merging performing slightly better than discarding. Since the number of domains is typically unknown in real-world scenarios and the coreset grows only when encountering truly unseen domains, our default approach allows K to evolve naturally.

\begin{table}[t]
\centering
\caption{Memory management strategies for fixed-size coreset (K=15) on ImageNet-to-ImageNet-C across 10 different orders.}
\scriptsize
\label{tab:memory_management}
\begin{tabularx}{0.47\textwidth}{lcccc}
\toprule
\multirow{2}{*}{\textbf{Algo.}} & \multirow{2}{*}{\textbf{Source}} & \multicolumn{3}{c}{\textbf{DPCore}} \\
\cmidrule(lr){3-5} & & \textbf{(flexible K)} & \textbf{(K=15, discard)} & \textbf{(K=15, merge)} \\
\midrule
\textbf{Err. Mean $\downarrow$} & 55.8 & 40.2 & 42.3 & 41.1 \\
\bottomrule
\end{tabularx}
\end{table}

\section{Conclusion}
We introduce a new CTTA setup: Continual Dynamic Change (CDC) setting that better reflects real-world scenarios where domains recur with varying frequencies and durations. Through extensive benchmarking on four datasets, we demonstrate that previous methods for CTTA struggle in CDC due to convergence issues, catastrophic forgetting, and negative transfer.
To remedy this, we propose DPCore, a simple yet effective approach to CTTA that achieves robust performance across diverse domain change patterns while maintaining computational efficiency. DPCore integrates three complementary components: \emph{Visual Prompt Adaptation} for efficient domain alignment with minimal parameters, \emph{Prompt Coreset} for strategic knowledge preservation, and \emph{Dynamic Update} for intelligently managing domain knowledge—updating existing prompts for similar domains while creating new ones for substantially different ones.
Through our experiments, we show that DPCore is effective and achieves SOTA performance for classification and segmentation tasks for the standard CSC and the new CDC setting.
Our theoretical analysis and comprehensive ablation studies further validate DPCore's effectiveness. %, providing insights for future CTTA research.
%We propose DPCore, a simple yet effective approach to CTTA that achieves robust performance across diverse domain change patterns while maintaining computational efficiency. DPCore integrates three complementary components: \emph{Visual Prompt Adaptation} for efficient domain alignment with minimal parameters, \emph{Prompt Coreset} for strategic knowledge preservation, and \emph{Dynamic Update} for intelligently managing domain knowledge—updating existing prompts for similar domains while creating new ones for substantially different ones. Additionally, we introduce the Continual Dynamic Change (CDC) setting that better reflects real-world scenarios where domains recur with varying frequencies and durations. Through extensive benchmarking on four datasets, we demonstrate that while previous methods struggle in CDC due to convergence issues, catastrophic forgetting, and negative transfer, DPCore maintains robust performance across both traditional CSC and challenging CDC settings. Our theoretical analysis and comprehensive ablation studies further validate DPCore's effectiveness, providing insights for future CTTA research.

\clearpage 

\section*{Acknowledgment}
We thank the anonymous reviewers for their insightful comments and suggestions. We are also grateful to Evan Shelhamer for his thoughtful feedback on connecting our work to the broader continual test-time adaptation literature.
This work was supported by the NSF EPSCoR-Louisiana Materials Design Alliance (LAMDA) program \#OIA-1946231.

\section*{Impact Statement}
This paper presents work whose goal is to advance the field of Machine Learning. There are many potential societal consequences of our work, none which we feel must be specifically highlighted here.

%%%%%%%%%%%%%%%%%%%%%%%%%%%%%%%%%%%%%%%%%%%%%%%%%%%%%%%%%%%%%%%%%%%%%%%%%%%%%%%
%%%%%%%%%%%%%%%%%%%%%%%%%%%%%%%%%%%%%%%%%%%%%%%%%%%%%%%%%%%%%%%%%%%%%%%%%%%%%%%

\bibliography{example_paper}
\bibliographystyle{icml2025}

\clearpage

\appendix

\twocolumn[\icmltitle{DPCore: Dynamic Prompt Coreset for Continual Test-Time Adaptation \\ \texttt{Supplementary Materials}}]{}

In the Appendix, we provide detailed supplementary materials to enhance understanding of our work:
First, we present background materials: related works and CTTA baselines and implementations (Sec.~\ref{sec:app_baseline}), and additional details about our method and CDC setting, including hyperparameter selection and theoretical analysis (Sec.~\ref{sec:app_dpcore_cdc}).
The experimental aspects are covered in several sections: additional experimental results (Sec.~\ref{sec:app_addtional_results}), discussion of our CDC setting with other CTTA settings (Sec.~\ref{sec:app_additional_settings}), and extended ablation studies (Sec.~\ref{sec:app_additional_ablation}). Finally, we discuss limitations of our method in Sec.~\ref{sec:app_limitation}.

\section{Related Works and Baselines}
\label{sec:app_baseline}

\subsection{Continual Learning (CL)}
CL \cite{rebuffi2017icarl, de2021continual, farajtabar2020orthogonal, rolnick2019experience, li2017learning, kirkpatrick2017overcoming, zenke2017continual, silver2002task, bobu2018adapting, mai2022online, Mai_2021_CVPR, shim2021online} addresses catastrophic forgetting by enabling models to retain learned information while acquiring new knowledge. This challenge is relevant to to TTA, especially in Continual Dynamic Change (CDC) environments, where adapting to new target domains often degrades performance on previously seen domains. To address these limitations, our approach integrates principles from CL into the TTA framework. We employ a dynamically updating coreset that preserves knowledge across visited domains, alongside a weighted updating mechanism to reduce error accumulation. 

\subsection{Baselines}
In this section, we provide the details of the CTTA baselines we use in our paper.

\textbf{Tent\footnote{\href{https://github.com/DequanWang/tent}{https://github.com/DequanWang/tent}}~\cite{wang2021tent}} updates the Norm Layer parameters through prediction entropy minimization. We follow the same hyperparameters that are set in Tent. 

\textbf{CoTTA\footnote{\href{https://github.com/qinenergy/cotta}{https://github.com/qinenergy/cotta}}~\cite{wang2022continual}} is the first to perform TTA on continually changing domains and propose a teacher-student learning scheme with augmentation-based consistency maximization. We follow the same hyperparameters that are set in CoTTA. The trainable parameters are all the parameters in ViT-Base. 

\textbf{DePT \cite{gao2023visual}} integrates visual prompts with Vision Transformers to adapt to target domains through memory bank-based pseudo-labeling. This method utilizes a significant number of prompts, which are initialized by warming up on source data.

\textbf{SAR\footnote{\href{https://github.com/mr-eggplant/SAR}{https://github.com/mr-eggplant/SAR}} \cite{niu2023towards}} introduces a sharpness-aware and reliable entropy minimization method. This approach selectively filters noisy samples and optimizes model weights towards stable regions in the parameter space.

\textbf{VDP \cite{gan2023decorate}} employs visual domain prompts in pixel space to adapt to continually changing domains. It dynamically updates lightweight prompts to facilitate domain adaptation, modifying input images instead of model parameters. The pixel prompts are initialized on source data.

\textbf{RDumb\footnote{\href{https://github.com/oripress/CCC}{https://github.com/oripress/CCC}} \cite{press2023rdumb}} employs periodic resets to the original pretrained weights at regular intervals to prevent the accumulation of adaptation errors that lead to performance collapse in continual test-time adaptation.

\textbf{EcoTTA\footnote{\href{https://github.com/Lily-Le/EcoTTA}{https://github.com/Lily-Le/EcoTTA}} \cite{song2023ecotta}} utilizes lightweight meta networks and self-distilled regularization to maintain memory efficiency and ensure long-term adaptation stability. This method emphasizes consistency between the outputs of meta networks and the original frozen network, with meta-networks warmed up on source data for several epochs.

\textbf{ViDA\footnote{\href{https://github.com/Yangsenqiao/vida}{https://github.com/Yangsenqiao/vida}} \cite{liu2023vida}} uses high-rank and low-rank domain adapters to manage domain-specific and shared knowledge. Designed to address continual test-time adaptation, it dynamically responds to changing domain conditions. The adapters are warmed up on source data. 

\textbf{BDG \cite{versatile_cvpr24}} is a framework designed to balance discriminability and generalization in CTTA by generating high-quality supervisory signals. This framework uses adaptive thresholds for pseudo-label reliability, leverages knowledge from a pre-trained source model to adjust unreliable signals, and calculates a diversity score to ensure future domain generalization.

\textbf{C-MAE\footnote{\href{https://github.com/RanXu2000/continual-mae}{https://github.com/RanXu2000/continual-mae}} \cite{cmae_cvpr24}} introduces Adaptive Distribution Masked Autoencoders for CTTA, which employ a Distribution-aware Masking mechanism to adaptively sample masked positions in images. 

We utilize official implementations of the method where available; otherwise, we implement it ourselves using the hyperparameters reported in the original paper. To ensure fair comparison, we maintain consistent hyperparameters across both CSC and CDC settings.

\section{Additional Details of DPCore}
\label{sec:app_dpcore_cdc}
\subsection{Visual Prompt Adaptation in TTA}
In the context of TTA where labels are unavailable, prompts can be optimized through consistency maximization, entropy minimization, or unsupervised distribution alignment \cite{niu2024test, sun2023vpa, gao2023visual, otvp_wacv25}. We employ distribution alignment due to its proven effectiveness and computational simplicity \cite{NIPS2006_b1b0432c, mehra2023analysis}.

For the current batch $B_t^T$ at time step $t$ from unknown target domain $T$, a prompt is optimized by minimizing the distribution alignment between source and target features:
\begin{equation}
\label{eq:singel_domain_prompt_online}
\bm{p}^* = \mathop{\arg \min}\limits_{\bm{p}}\;d(\mathcal{Z}^S, \mathcal{Z}^T_t(\bm{p}))
,,
\end{equation}
where $d$ represents a distribution distance and $\mathcal{Z}^T_t(\bm{p})=\phi(B_t^T; \theta_\phi, \bm{p})$ denotes the extracted features of the current batch with prompt $\bm{p}$, as shown on the right of Fig. \ref{fig:workflow}. We employ marginal distribution alignment \cite{NIPS2006_b1b0432c, pmlr-v139-le21a, shen2018wasserstein} to avoid potential error accumulation from pseudo labels in continual learning scenarios, though other sophisticated distance metrics are also applicable.

Unlike previous prompt-based TTA methods such as DePT \cite{gao2023visual} or VDP \cite{gan2023decorate} that require prompt warm-up on source data for optimal initialization, we initialize prompt tokens from a Gaussian distribution without any warm-up, following Visual Prompt Tuning in supervised settings \cite{jia2022visual}. This approach makes our method more practical and efficient.

\subsection{Algorithm of DPCore}
To illustrate how DPCore operates as shown in Fig.~\ref{fig:workflow}, we present its complete algorithm in Alg.~\ref{alg:main_algo}.

\begin{algorithm}
    \caption{The proposed algorithm DPCore}% \YZ{split it into several algorithms}\JH{Agreed}}
    \label{alg:main_algo}
    \footnotesize
    \textbf{Input}: A source pre-trained model $f(x)$, source statistics $\Gamma^S$, test batches $\{B_t\}_{t=1}^T$

    \textbf{Initialization}: An empty coreset, a pre-defined ratio threshold $\rho$, and random prompt tokens. 
    \begin{algorithmic}[1]
        \FOR {the first batch $B_1$}
            \STATE Compute the statistics of batch $B_1$ without prompt.
            \STATE Learn prompt from scratch (Eq. \ref{eq:singel_domain_prompt_online}).
            \STATE Add the batch statistics and prompt to the empty coreset.
        \ENDFOR
        \FOR{$t = 2, ..., T$}
            \STATE Compute the statistics of batch $B_t$ without prompt.
            \STATE Compute the weights and weighted prompts (Eq. \ref{eq:weighted_prompt}). 
            \STATE Compute batch statistics with the weighted prompt.
            \STATE Compute ratio of distances with/w.o. weighted prompt.

            \IF{ratio $\leq\rho$}
                \STATE Update weighted prompt on $B_t$ by 1 step.
                \STATE Update all coreset elements (Eq. \ref{eq:update_coreset}).

            \ELSE 
                \STATE Learn prompt from scratch for $B_t$ (Eq. \ref{eq:singel_domain_prompt_online}).
                \STATE Add the batch statistics and learned prompt to the coreset as a new element.
            \ENDIF

        \ENDFOR
    \end{algorithmic}
    \textbf{Output}: Prediction for all batches; The learned coreset.
\end{algorithm}

% \begin{algorithm}[tb]
%    \caption{Bubble Sort}
%    \label{alg:example}
% \begin{algorithmic}[1]
%    \STATE {\bfseries Input:} data $x_i$, size $m$
%    \REPEAT
%    \STATE Initialize $noChange = true$.
%    \FOR{$i=1$ {\bfseries to} $m-1$}
%    \IF{$x_i > x_{i+1}$}
%    \STATE Swap $x_i$ and $x_{i+1}$
%    \STATE $noChange = false$
%    \ENDIF
%    \ENDFOR
%    \UNTIL{$noChange$ is $true$}
% \end{algorithmic}
% \end{algorithm}

\subsection{Implementation Details}
\label{sec:app_dpcore_implementation}
Here we detail the hyperparameters used in our main experiments. We randomly sample 300 source examples to compute the statistics and set the number of prompt tokens to 8 with a test batch size of 64. The updating weight is set as $\alpha=0.999$. We learn the prompt from scratch for 50 steps and refine the existing prompt for only 1 step. The model is optimized using AdamW with a learning rate of 0.01, and the threshold $\rho$ is set to 0.8. These hyperparameters were determined using four disjoint validation corruptions from ImageNet-C and CIFAR10-C \cite{hendrycks2019benchmarking}: [Speckle Noise, Gaussian Blur, Spatter, Saturate], following \cite{zhang2022memo}.

For datasets without validation sets (e.g., CIFAR100-to-CIFAR100C and Cityscapes-to-ACDC), we apply these same hyperparameters without additional tuning to align with practical testing conditions, where hyperparameters must be selected prior to accessing target data. Moreover, we employ identical hyperparameters across both CSC and CDC settings.

\subsection{Comprehensive Analysis of DPCore}
\label{sec:app_proof} 
We elaborate on the theoretical analysis introduced in Sec.~\ref{sec:analysis}. For simplicity, we consider a version of Alg.~\ref{alg:main_algo} where we use one-hot weights instead of general weights—only the closest core element to a given test batch is considered, and only the mean is stored in the coreset. The simplified algorithm is presented in Alg.~\ref{alg:simplified_algo}, denoted as $\mathcal{A}$

Consider batches $B_1,...,B_t$ naturally clustered into $M$ mutually exclusive clusters $\{G^i\}_{i=1}^M$ based on their distances. Let $\textrm{Conv}(G^i)$ be the convex hull of batches in $G^i$, $\textrm{diam}(S)=\sup_{x,x'\in S}d(x,x')$ be the diameter of set $S$, and $d(S,S')=\inf_{x\in S, x'\in S'}d(x,x')$ be the set distance. We denote by $\Pi^{(t-1)}$ the set of permutations on $(1,...,t-1)$, by $G^i_t$ the set of batches in cluster $i$ at time $t$, and by $\{\bm{p}^i_t, \bm{\mu}^i_t\}$ the $i$-th core element at time $t$. Each core element consists of a core prompt $\bm{p}^i_t$ and a core mean $\bm{\mu}^i_t$. 
Abstractly, the algorithm $\mathcal{A}$ generates the current prompt $\bm{p}_t = \mathcal{A}(\{B_{t-1},..., B_1\})$ using the batch history $\{B_1, ... , B_{t-1}\}$ as input.

\textbf{Assumption} (Well-separated clusters): There exists $\theta>0$ such that:
\[
\textrm{diam(Conv}(G^i)) < \theta < d(\textrm{Conv}(G^i),\textrm{Conv}(G^j)),\; \forall i\neq j \,.
\]
This implies batches within the same cluster are closer to each other than to batches from different clusters.

\begin{proposition}
\label{prop1}
For any $t\geq 1$, Alg.~\ref{alg:simplified_algo} assigns all batches $B_1,...,B_t$ to correct clusters.
\end{proposition}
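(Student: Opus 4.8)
The plan is to proceed by strong induction on $t$, showing that at every time step the simplified algorithm $\mathcal{A}$ both assigns the incoming batch to the correct cluster and maintains the invariant that each existing core element corresponds to exactly one cluster (with its stored core mean lying in the convex hull of the batches seen so far from that cluster). The base case $t=1$ is immediate: the coreset is empty, so $B_1$ triggers the creation of a fresh core element, which trivially represents its own cluster correctly.

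For the inductive step, I assume all of $B_1,\dots,B_{t-1}$ have been correctly assigned and that each core mean $\bm{\mu}^i_{t-1}$ satisfies $\bm{\mu}^i_{t-1}\in\textrm{Conv}(G^i_{t-1})$. When $B_t$ arrives, I distinguish two cases. If $B_t$ belongs to a cluster $G^i$ already represented in the coreset, I would show that the nearest core element to $B_t$ is exactly the one for $G^i$: using the well-separatedness assumption, any batch in $B_t$'s true cluster is within $\theta$ of $\bm{\mu}^i_{t-1}$ (since both lie in $\textrm{Conv}(G^i)$, whose diameter is below $\theta$), while every other core mean $\bm{\mu}^j_{t-1}$ lies in $\textrm{Conv}(G^j)$, which is at distance strictly greater than $\theta$ from $\textrm{Conv}(G^i)$. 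A triangle-inequality comparison then forces the closest core element to be the correct one, and the one-step update keeps the refreshed core mean inside $\textrm{Conv}(G^i_t)$ because the update in Eq.~\eqref{eq:update_coreset} is a convex combination of the old core mean and the new batch statistic. If instead $B_t$ comes from a genuinely new cluster, I must argue that the distance-ratio test in Alg.~\ref{alg:simplified_algo} fires and a new core element is spawned; here the key is that the weighted (in the one-hot case, nearest) prompt is tuned to a different cluster and therefore fails to reduce the source distance sufficiently, so the ratio exceeds $\rho$.

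The main obstacle I anticipate is the new-cluster case, specifically formalizing why the threshold test correctly detects novelty. The clean separation in the assumption controls distances between batch statistics, but the decision in the algorithm is phrased in terms of the ratio $d(\Gamma^S,\Gamma_t(\bm{p}_w))/d(\Gamma^S,\Gamma_t)$, which couples the separation geometry to the behavior of the learned prompts. In the simplified one-hot setting this reduces to asking whether the prompt trained for the nearest existing cluster meaningfully shrinks the gap to the source for a batch from a different cluster; I would need either an explicit regularity hypothesis tying prompt efficacy to cluster membership or to fold this detection guarantee into the well-separatedness assumption itself. I expect the argument to treat correct assignment (the nearest-element comparison) as the rigorous geometric core and to rely on the clustering structure to justify that novelty detection behaves consistently, deferring the finer coupling between prompt quality and separation to the stated assumptions. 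The remaining bookkeeping—checking that the convex-hull invariant propagates and that the conclusion holds for all $t\geq 1$ by induction—is routine once these two cases are settled.
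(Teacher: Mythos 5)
Your inductive skeleton---the convex-hull invariant for the core means, the triangle-inequality comparison showing the nearest core element is the correct one, and the propagation of the invariant under the interpolation update---is exactly the argument the paper gives (the paper merely splits your two cases into four according to whether the target cluster and/or other clusters are already populated, which changes nothing substantive).

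The one real problem is the obstacle you flag in the new-cluster case, which is a phantom created by analyzing the wrong test. Proposition~\ref{prop1} is stated for the \emph{simplified} algorithm, Alg.~\ref{alg:simplified_algo}, whose novelty decision is not the ratio $d(\Gamma^S,\Gamma_t(\bm{p}_w))/d(\Gamma^S,\Gamma_t)$ from Alg.~\ref{alg:main_algo} but the purely geometric test ``create a new element iff the coreset is empty or $\min_j d(\bm{\mu}(B_t),\bm{\mu}^j) > \theta$.'' No prompt enters the assignment decision at all, so there is nothing to say about prompt efficacy on foreign clusters. With that test, the new-cluster case follows immediately from the same well-separatedness bound you already use for assignment: every existing core mean $\bm{\mu}^k$ lies in $\textrm{Conv}(G^k)$ for some $k\neq j$, hence $d(\bm{\mu}(B_t),\bm{\mu}^k) \geq d(\textrm{Conv}(G^j),\textrm{Conv}(G^k)) > \theta$, and the algorithm spawns a new element; conversely when cluster $j$ is already represented, $d(\bm{\mu}(B_t),\bm{\mu}^j) \leq \textrm{diam}(\textrm{Conv}(G^j)) < \theta$, so the test does not fire. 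As written, your plan defers this case to ``an explicit regularity hypothesis tying prompt efficacy to cluster membership,'' which would leave the proof incomplete; the fix is simply to argue against the test the simplified algorithm actually uses, after which your argument coincides with the paper's.
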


\begin{proof}
We prove by induction. At $t=1$, since no clusters exist, $B_1$ creates the first cluster with $\bm{\mu}^1 = \bm{\mu}(B_1)$. Suppose cluster assignments are correct for batches $B_1,...,B_{t-1}$. Since each core mean $\bm{\mu}^i$ is updated through interpolation $(1-\alpha)\bm{\mu}^i + \alpha \bm{\mu}(B_t)$, it remains within the convex hull of its cluster: $\bm{\mu}^i_t \in \mathrm{Conv}(G^i_t)$.

When a new batch $B_t$ belonging to cluster $j$ arrives, we consider four cases based on whether cluster $j$ or other clusters contain batches:
1) If no clusters exist ($|G^j_{t-1}|= |G^k_{t-1}|=0,\; \forall k\neq j$), $B_t$ creates a new cluster.
2) If only other clusters exist ($|G^j_{t-1}|=0,\;\mathrm{and}\; \exists k\neq j: |G^k_{t-1}|>0$), well-separatedness ensures $d(\bm{\mu}(B_t),\bm{\mu}^k) > \theta$, so $B_t$ creates a new cluster.
3) If only cluster $j$ exists ($|G^j_{t-1}|>0,\; \mathrm{and}\; |G^k_{t-1}|=0,\; \forall k\neq j$), well-separatedness ensures $d(\bm{\mu}(B_t),\bm{\mu}^j) < \theta$, so $B_t$ joins cluster $j$.
4) If multiple clusters exist ($|G^j_{t-1}|>0,\; \mathrm{and}\;\exists k\neq j: |G^k_{t-1}|>0$), well-separatedness ensures $d(\bm{\mu}(B_t),\bm{\mu}^j) < \theta < d(\bm{\mu}(B_t),\bm{\mu}^k)$, so $B_t$ correctly joins cluster $j$.
\end{proof}

\begin{proposition}
\label{prop2}
For any $t>1$, Alg.~\ref{alg:simplified_algo} assigns $B_t$ to the correct cluster independent of batch order $B_{\pi(1)},...,B_{\pi(t-1)}, \forall \pi \in \Pi^{(t-1)}$. Furthermore, if $\alpha=\frac{1}{|G^j|}$, then the core mean is the cluster mean: $\bm{\mu}^i_t = \frac{1}{|G^i_t|}\sum_{B\in G^i_t} \bm{\mu}(B)$, also independent of batch order.
\end{proposition}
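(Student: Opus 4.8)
The plan is to prove the two claims of Proposition~\ref{prop2} in sequence, leaning heavily on the machinery already established in Proposition~\ref{prop1}. The key observation is that the well-separatedness assumption makes cluster membership a property of the batch itself, not of the processing order: each batch $B$ belongs to exactly one $G^i$ as determined by the underlying geometry, and the diameter/separation bounds guarantee that whichever core mean is active, the correct assignment is forced. So for the first claim I would argue that the invariant ``all core means satisfy $\bm{\mu}^i_t \in \mathrm{Conv}(G^i_t)$'' holds under \emph{any} permutation $\pi \in \Pi^{(t-1)}$, since the interpolation update $(1-\alpha)\bm{\mu}^i + \alpha\bm{\mu}(B)$ only ever averages means of batches from the same cluster and thus stays in the convex hull regardless of the order in which those batches were absorbed. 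Given this invariant, the four-case analysis from Proposition~\ref{prop1} applies verbatim to $B_t$: whatever the history $B_{\pi(1)},\ldots,B_{\pi(t-1)}$, the distance $d(\bm{\mu}(B_t),\bm{\mu}^j)<\theta<d(\bm{\mu}(B_t),\bm{\mu}^k)$ still holds because $\bm{\mu}^j \in \mathrm{Conv}(G^j)$ and $\bm{\mu}^k \in \mathrm{Conv}(G^k)$, so $B_t$ joins cluster $j$ independent of $\pi$.

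For the second claim, I would specialize the updating weight to $\alpha = \frac{1}{|G^j|}$ and show by induction on the number of batches absorbed into a given cluster that the core mean equals the empirical cluster mean $\frac{1}{|G^i_t|}\sum_{B\in G^i_t}\bm{\mu}(B)$. The natural inductive step is to track, within cluster $i$, the sequence of batches assigned to it in the processing order; when the $m$-th such batch $B$ arrives, the running core mean $\bm{\mu}^i$ already equals the average of the first $m-1$ batch means, and the update with weight $\alpha = \frac{1}{m}$ produces $\frac{m-1}{m}\bm{\mu}^i + \frac{1}{m}\bm{\mu}(B)$, which is exactly the average of all $m$ batch means. Since an unweighted average of a finite set is invariant under reordering of the summands, the final core mean $\bm{\mu}^i_t$ is the cluster mean regardless of $\pi$.

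The subtlety I need to watch is the exact meaning of $\alpha = \frac{1}{|G^j|}$: I would read this as a running count—when the $m$-th batch joins cluster $j$, the weight used is $\frac{1}{m}$, so that the classic online-mean telescoping $\frac{m-1}{m}\bar{x}_{m-1}+\frac{1}{m}x_m = \bar{x}_m$ goes through cleanly. I should state this convention explicitly rather than interpreting $|G^j|$ as the final cluster size, since the latter would not telescope correctly. With that convention fixed, the telescoping is a routine computation and the order-independence of both the assignment and the resulting mean follows from the commutativity of addition.

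The main obstacle is the first claim rather than the second: I must verify that the convex-hull invariant genuinely survives an arbitrary permutation, i.e.\ that no alternate ordering could ever place a core mean outside its cluster's hull and thereby cause a misassignment. This requires being careful that Proposition~\ref{prop1}'s induction was really proving a statement about assignments under the natural order, and that its argument used only the geometric facts (diameter below $\theta$, separation above $\theta$) which are order-agnostic. Once I confirm that the four-case dichotomy depends on the history only through the membership of core means in their hulls—and that membership is preserved by convex interpolation regardless of order—the order-independence of the assignment is immediate, and the mean computation reduces to the commutativity argument above.
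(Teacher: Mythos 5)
Your proposal is correct and follows essentially the same route as the paper: the first claim is reduced to the observation that Proposition~\ref{prop1}'s case analysis depends on the history only through well-separatedness and the convex-hull invariant (both order-agnostic), and the second claim is the standard online-mean telescoping $\frac{m-1}{m}\bar{x}_{m-1}+\frac{1}{m}x_m=\bar{x}_m$, whose result is invariant under reordering. Your explicit reading of $\alpha=\frac{1}{|G^j|}$ as the running cluster count at the time of each update is exactly the convention the paper uses implicitly in its sequence of updates $\bm{\mu}^j=\bm{\mu}(B^{(1)})$, $\frac{1}{2}(\cdots)$, $\frac{1}{3}(\cdots)$, so making it explicit is a welcome clarification rather than a deviation.
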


\begin{proof}
From Proposition~\ref{prop1}, correct assignment at time $t$ depends only on well-separatedness and correct assignments at $t-1$, not on batch order. For the second claim, consider batches $B^{(1)},B^{(2)},...$ are the batches assigned to cluster $j$ in the order they are processed. With $\alpha=\frac{1}{|G^j|}$, updates yield: $\bm{\mu}^j=\bm{\mu}(B^{(1)})$, $\bm{\mu}^j=\frac{1}{2}(\bm{\mu}(B^{(1)})+\bm{\mu}(B^{(2)}))$, $\bm{\mu}^j=\frac{1}{3}(\bm{\mu}(B^{(1)})+\bm{\mu}(B^{(2)})+\bm{\mu}(B^{(3)}))$, and so on. This running average is independent of batch order.
\end{proof}

\begin{proposition}
\label{prop3}
For any $t>1$, Alg.~\ref{alg:simplified_algo} learns prompt $\bm{p}_t$ for batch $B_t$ independent of batch order $B_{\pi(1)},...,B_{\pi(t-1)}, \forall \pi \in \Pi^{(t-1)}$.
\end{proposition}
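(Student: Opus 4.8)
The plan is to reduce Proposition~\ref{prop3} to the two previous propositions by isolating exactly which quantities feed into the prompt-learning step at time $t$. First I would trace through Algorithm~\ref{alg:simplified_algo}: the prompt $\bm{p}_t$ for batch $B_t$ is produced in one of two ways, depending on whether $B_t$ triggers the ``refine'' branch or the ``learn-from-scratch'' branch. In the learn-from-scratch branch the prompt is obtained by minimizing $d(\Gamma^S, \Gamma_t(\bm{p}))$ via Eq.~\eqref{eq:prompt_learned_from_scratch}; the only inputs are the fixed source statistics $\Gamma^S$ and the features of $B_t$ itself, neither of which depends on the order of $B_1,\dots,B_{t-1}$, so this case is immediate. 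The substantive case is the refine branch, where $\bm{p}_t$ is obtained by a single update step starting from the weighted (in the simplified one-hot setting, the nearest-neighbor) prompt $\bm{p}^{j}_t$ drawn from the coreset.

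The key observation I would then establish is that in the refine branch, $\bm{p}_t$ is a deterministic function of just three things: the features of $B_t$, the source statistics $\Gamma^S$, and the single core prompt $\bm{p}^{j}_t$ of the cluster $j$ to which $B_t$ is assigned. The second step is therefore to invoke Proposition~\ref{prop1} and Proposition~\ref{prop2}: Proposition~\ref{prop1} guarantees $B_t$ is assigned to the correct cluster $j$ regardless of order, and Proposition~\ref{prop2} guarantees that the core mean $\bm{\mu}^{j}_t$ (hence the nearest-neighbor assignment that selected $\bm{p}^{j}$) is order-independent. It then remains to argue that the \emph{core prompt} $\bm{p}^{j}_t$ itself is order-independent. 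I would do this by an induction paralleling Proposition~\ref{prop2}: if $B^{(1)}, B^{(2)}, \dots$ are the batches assigned to cluster $j$ in processing order, then each core prompt is built by successive interpolations of prompts, each of which was in turn computed from its own batch features and $\Gamma^S$; under the well-separatedness assumption the set $G^j_t$ of batches in cluster $j$ at time $t$ is itself order-invariant (a set, not a sequence), so the resulting $\bm{p}^{j}_t$ is a symmetric function of $\{B^{(m)}\}$ and hence independent of the permutation.

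The main obstacle I anticipate is the interpolation update for prompts is \emph{not} a plain running average the way the mean update is in Proposition~\ref{prop2}. In the simplified one-hot algorithm each refine step does $\bm{p}^{j} \gets \bm{p}^{j} + \alpha(\bm{p}_t - \bm{p}^{j})$, and $\bm{p}_t$ is itself the one-step-refined weighted prompt, so the update is a nested composition rather than a clean convex combination of fixed summands. Consequently the value of $\bm{p}^{j}_t$ genuinely depends on the order in which batches within cluster $j$ are processed unless one is careful. I would resolve this by restricting the claim to the same special regime used in Proposition~\ref{prop2}, namely treating the prompt update as a running average over the cluster-learned prompts (or, equivalently, arguing that the \emph{final} prompt $\bm{p}_t$ learned for $B_t$ in the refine branch converges to the minimizer of $d(\Gamma^S,\Gamma_t(\bm{p}))$ and is therefore determined by $B_t$ and $\Gamma^S$ alone). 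The cleanest honest statement is that $\bm{p}_t$ depends only on $B_t$, $\Gamma^S$, and the order-invariant set $G^j_t$; since all three are order-independent by the preceding propositions, so is $\bm{p}_t$.

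I would close the argument by assembling these pieces: both branches yield a $\bm{p}_t$ that is a function solely of order-invariant data (either $B_t$ and $\Gamma^S$ directly, or additionally the order-invariant cluster membership and core prompt), and therefore $\bm{p}_t = \mathcal{A}(\{B_{t-1},\dots,B_1\})$ takes the same value for every permutation $\pi \in \Pi^{(t-1)}$, which is precisely the claim. The delicate point worth flagging explicitly in the writeup is the dependence of the refine-branch prompt on the interpolation history, and I would state clearly the simplifying convention (running-average prompt update, mirroring Proposition~\ref{prop2}) under which the symmetry holds cleanly.
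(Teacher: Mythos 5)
Your proposal follows essentially the same route as the paper's proof: split into the learn-from-scratch branch (immediate, since the prompt depends only on $B_t$ and $\Gamma^S$) and the refine branch, then invoke Propositions~\ref{prop1} and~\ref{prop2} to get order-independent cluster assignment, and conclude that the prompt inherits order-independence from the core element it starts from. The paper's own proof is exactly this two-case argument, stated in three lines, and it simply asserts ``Since $\bm{p}^i$ is order-independent, $\bm{p}_t$ is also order-independent'' without further justification. What you add — and this is the valuable part of your writeup — is the explicit observation that this assertion does not follow from Proposition~\ref{prop2} as stated: that proposition only establishes order-independence of the core \emph{mean} under the running-average choice $\alpha = \frac{1}{|G^j|}$, whereas the core \emph{prompt} is built by nested one-step refinements whose summands are not fixed quantities, so its order-independence requires either the analogous running-average convention for the prompt update or an argument that the refined prompt converges to the minimizer of $d(\Gamma^S,\Gamma_t(\bm{p}))$ determined by $B_t$ and $\Gamma^S$ alone. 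Your flagged resolution (restricting to the same simplified regime as Proposition~\ref{prop2}, or arguing the prompt is a symmetric function of the order-invariant set $G^j_t$) is the honest way to close this, and it makes explicit a simplifying convention that the paper's proof leaves implicit. In short: same decomposition and same key dependencies, but your version is more careful about the one step the paper hand-waves.
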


\begin{proof}
From Proposition~\ref{prop2}, cluster assignment is independent of batch order. For batch $B_t$, two cases arise:
1) If $B_t$ belongs to an existing cluster $i$, $\bm{p}_t$ is learned starting from existing core prompt $\bm{p}^i$. Since $\bm{p}^i$ is order-independent, $\bm{p}_t$ is also order-independent.
2) If $B_t$ creates a new cluster, $\bm{p}_t$ is learned from scratch, making it inherently order-independent.
\end{proof}

Our analysis demonstrates three key properties of DPCore under the well-separatedness assumption: 1) Correct Clustering: DPCore correctly assigns all batches to their respective clusters regardless of sequence length. 2) Order Independence: Cluster assignments remain correct regardless of the order in which batches arrive. 3) Prompt Stability: The learned prompts are independent of batch order.

These properties together show that DPCore can effectively manage domain knowledge even when domains appear in arbitrary orders—a crucial advantage in dynamic environments. While this analysis uses simplified assumptions, it provides valuable insight into DPCore's robustness to random domain changes. The full analysis under general conditions remains an interesting direction for future work.

\begin{algorithm}
    \caption{A simplified version of the proposed algorithm}
    \label{alg:simplified_algo}
    % \textbf{Initialization}: A source pre-trained model $f(x)$, an empty prompt list $list:=[]$ and offline-computed source feature statistics $\bm{\mu}_s, \bm{\sigma}_s$
    %\textbf{Input}: A source pre-trained model $f(x)$, offline-computed source feature statistics $\bm{s}^s$, batches of test samples $\{B_t\}_{t=1}^T$, epochs FOR ID samples \texttt{E$_{\text{ID}}$}, epochs FOR OOD samples \texttt{E$_{\text{OOD}}$}, hyperparameters $\lambda$, $\alpha$, $\rho$, and $\tau$

    %\textbf{Initialization}: An empty prompt and statistic bank/coreset $\texttt{P:=[]}$, and random prompt tokens $\bm{p}_0$. 
    \begin{algorithmic}[1]
        \FOR{$t = 1,2, ..., T$}
            \STATE Compute the batch mean $\bm{\mu}(B_t)$ and distances $d^j = d(\bm{\mu}(B_t),\bm{\mu}^j),\;i=1,...,K$ against all existing core elements.%($N$ is the number of core elements/clusters so far.)            
            \IF {coreset is empty or $\min_j d^j > \theta$}
                \STATE Learn prompt from scratch for $B_t$ (Eq. \ref{eq:singel_domain_prompt_online}).
                \STATE Add the batch mean and learned prompt to the coreset as a new element.
                % \STATE $N \leftarrow N+1$
            \ELSE
                \STATE Assign $B_t$ to the closest core element $j^*=\arg\min_j d^j$. 
                \STATE Update the prompt in $j^*$-th core element by 1 step.
                \STATE Update $j^*$-th core element (Eq. \ref{eq:update_coreset}).
            \ENDIF
        \ENDFOR
    \end{algorithmic}
    \textbf{Output}: Prediction for all batches; The learned coreset.
\end{algorithm}

\section{Additional Results}
\label{sec:app_addtional_results}

\subsection{Performance of Single Domain-learned Prompt}
\label{sec:app_single_domain_prompt}
To validate the core intuition behind DPCore, we conduct a preliminary experiment examining how a prompt learned from one domain transfers to others. Specifically, we learn a prompt from scratch on the Gaussian Noise domain and evaluate its effectiveness across the remaining 14 domains. For comparison, we also learn domain-specific prompts independently with known domain boundaries, where the model resets each time it encounters a new domain. As shown in Table~\ref{tab:app_single_domain_promopt_results}, the Gaussian Noise-learned prompt yields interesting transfer patterns: it significantly improves performance on similar domains (Shot Noise and Impulse Noise), shows limited effectiveness on others (Defocus Blur and Elastic), and even degrades performance on substantially different domains (Motion and Contrast). These performance variations strongly align with the domain distance measurements illustrated in Fig.~\ref{fig:distance_to_src}, suggesting that the effectiveness of prompt transfer is directly related to the similarity of domains in feature space.

Notably, DPCore surprisingly outperforms the domain-specific prompts despite requiring no domain boundary information. This superior performance stems from DPCore's ability to learn a unified prompt for similar domains (e.g., the entire Noise group including Gaussian, Shot, and Impulse), achieving better adaptation than learning from individual domains in isolation.

These empirical findings reveal two crucial insights that motivate DPCore's design: (1) prompts can transfer effectively between similar domains, potentially enabling more efficient adaptation, and (2) for fundamentally different domains, learning new domain-specific prompts is essential to prevent negative transfer.

\begin{table*}[ht]
\caption{\textbf{Classification error rate (\%) for ImageNet-to-ImageNet-C in the CSC setting.} Gaussian Noise-learned Prompt denotes the prompt learned solely on Gaussian Noise and evaluated across other domains. Domain-specific prompt denote the prompt learned on each domain separately with known domain boundary.}

\label{tab:app_single_domain_promopt_results}
\centering
\setlength{\tabcolsep}{2.5pt}
\resizebox{1.0\textwidth}{!}{%

\begin{tabular}{c|ccc|cccc|cccc|cccc|cc}
\toprule
 \textbf{Algorithm} & \textbf{Gauss.} & \textbf{Shot} & \textbf{Impulse} & \textbf{Defocus} & \textbf{Glass} & \textbf{Motion} & \textbf{Zoom} & \textbf{Snow} & \textbf{Frost} & \textbf{Fog} & \textbf{Bright} & \textbf{Contrast} & \textbf{Elastic} & \textbf{Pixel} & \textbf{JPEG} & \textbf{Mean$\downarrow$} & \textbf{Gain$\uparrow$}\\
\midrule
Source&53.0&51.8&52.1&68.5&78.8&58.5&63.3&49.9&54.2&57.7&26.4&91.4&57.5&38.0&36.2&55.8&0.0 \\

Tent&52.2&48.9&49.2&65.8&73.0&54.5&58.4&44.0&47.7&50.3&23.9&72.8&55.7&34.4&33.9&51.0&+4.8 \\

Gaussian Noise-learned Prompt & 42.2 & 40.4 & 42.1 & 67.5 & 65.3 & 60.4 & 66.8 & 50.0 & 41.2 & 56.8 & 27.6 & 95.7 & 56.3 & 37.7 & 32.6 & 52.2 & +3.6 \\

Domain-specific Prompt & 42.2 & 39.9 & 41.6 & 58.8 & 58.1 & 48.6 & 44.0 & 35.0 & 40.6 & 41.5 & 21.1 & 42.9 & 45.6 & 29.6 & 28.0 & 41.2 & +14.6 \\

\rowcolor{lightgray} 
\textbf{DPCore (Proposed)} & \textbf{42.2} & \textbf{38.7} & \textbf{39.3} & \textbf{47.2} & \textbf{51.4} & \textbf{47.7} & \textbf{46.9} & \textbf{39.3} & \textbf{36.9} & \textbf{37.4} & \textbf{22.0} & \textbf{44.4} & \textbf{45.1} & \textbf{30.9} & \textbf{29.6} & \textbf{39.9} & \textbf{+15.9} \\
\bottomrule
\end{tabular}
}
\end{table*}

\subsection{Results for 10 Random Orders in CSC}
\label{sec:app_10order_csc}
This section provides details for the results presented in Fig.~\ref{fig:10rounds_err_size}. We strictly follow the ten random orders used in CoTTA \cite{wang2022continual} and evaluate DPCore on ImageNet-to-ImageNet-C across all ten domain orders. It is important to note that even though the domain order is changed, the domain change frequency and domain length remain fixed, resulting in the same domain change pattern in CSC setting. DPCore demonstrates robust performance in this setting, achieving an average error rate of 40.2\% with an average coreset size of 13.9 elements. These results highlight the consistency and effectiveness of DPCore across various domain orders, further validating its applicability in real-world scenarios where the order of encountered domains may vary.

\begin{figure}
    \centering
    \includegraphics[width=0.98\linewidth]{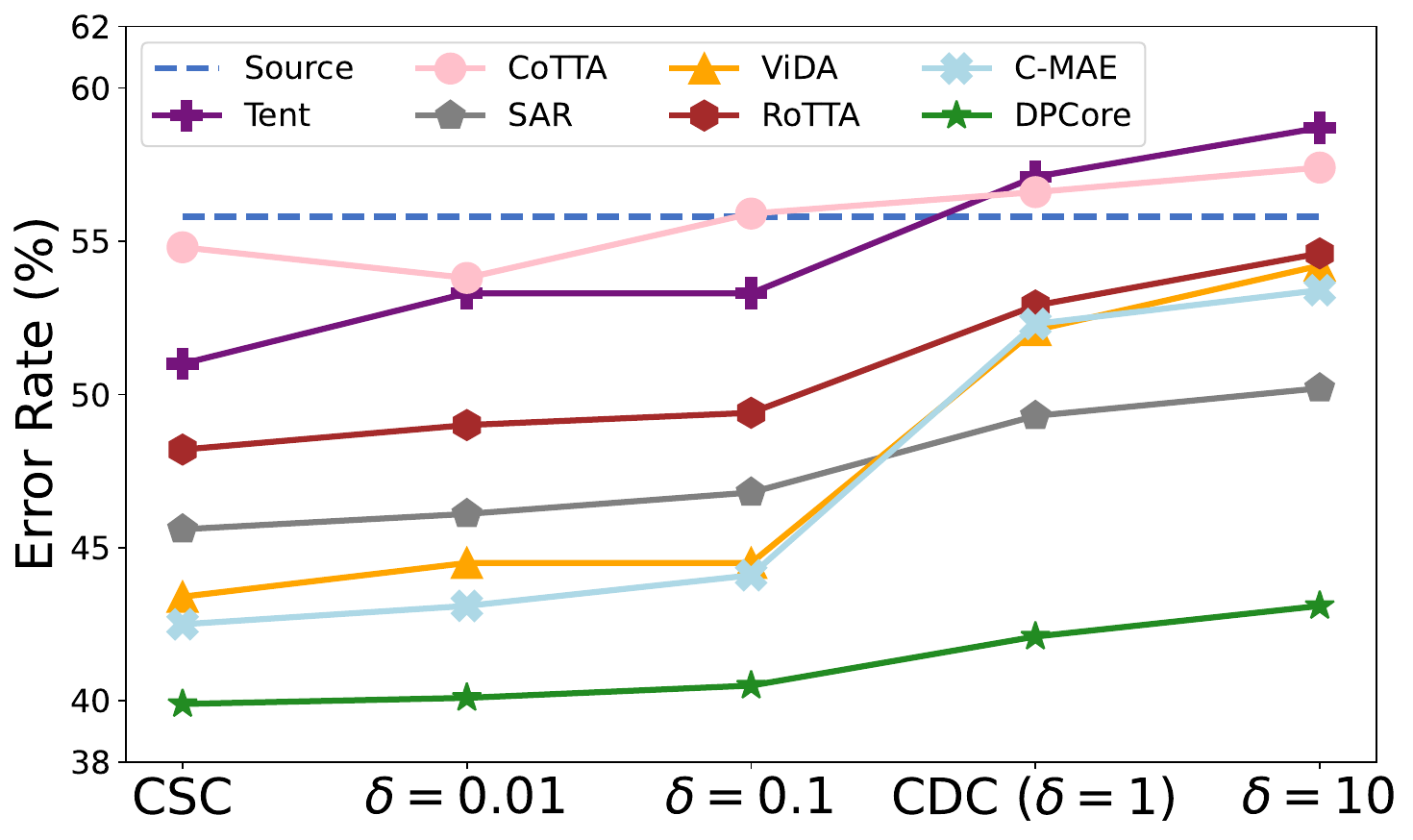}
    \caption{Performance comparison across different domain change patterns on ImageNet-to-ImageNet-C. The x-axis shows settings from CSC to CDC with increasing Dirichlet parameter $\delta$, where larger $\delta$ indicates more frequent domain changes. DPCore maintains stable performance even with rapid domain transitions, while other methods degrade significantly.}
    \label{fig:imagenet_cdc_all_dirichlet}
\end{figure}

\subsection{Results for CIFAR10/100-to-CIFAR10/100C}
\label{sec:app_cifar}
In this section, we present the results on CIFAR10/100-to-CIFAR10/100C across both CSC and CDC settings. The results for CIFAR10-to-CIFAR10C are in Table~\ref{tab:cifar10c_csc} and Table~\ref{tab:cifar10c_cdc} for CSC and CDC settings respectively. In the CSC setting, DPCore significantly outperforms all existing methods, achieving a +12.8\% improvement over the source pre-trained model and surpassing the previous SOTA (ViDA) by 5.3\%. This improvement is consistent across all corruption types, with particularly strong performance on noise corruptions (reducing error rates from 60.1\% to 22.0\% on Gaussian Noise). In the more challenging CDC setting, while previous methods show substantial degradation (ViDA's improvement drops from +7.5\% to +4.6\%), DPCore maintains robust performance with an 11.1\% improvement over the source model. 

Similarly, for CIFAR100-to-CIFAR100C task (results in Tables~\ref{tab:cifar100c_csc} and~\ref{tab:cifar100c_cdc}), DPCore demonstrates strong performance across both settings. In CSC, DPCore achieves a 10.3\% improvement over the source model and outperforms ViDA by 2.2\%, with notable improvements across all corruption types, particularly on Glass (44.1\% vs 60.5\% source error rate) and Contrast corruptions (13.2\% vs 34.8\% source error rate). The CDC setting presents a greater challenge, where several methods (CoTTA, VDP) perform worse than the source model, and ViDA's improvement drops from +8.1\% to +5.6\%. In contrast, DPCore maintains robust performance with an 8.0\% improvement over the source model, demonstrating its effectiveness.
\begin{table*}[ht]
% \tiny
\caption{\textbf{Classification error rate (\%) for CIFAR10-to-CIFAR10C in the CSC setting.} }
% \vspace{-.3cm}
\label{tab:cifar10c_csc}
\centering
\setlength{\tabcolsep}{2.5pt}
\resizebox{1.0\textwidth}{!}{%

\begin{tabular}{c|ccc|cccc|cccc|cccc|cc}
\toprule
 \textbf{Algorithm} & \textbf{Gauss.} & \textbf{Shot} & \textbf{Impulse} & \textbf{Defocus} & \textbf{Glass} & \textbf{Motion} & \textbf{Zoom} & \textbf{Snow} & \textbf{Frost} & \textbf{Fog} & \textbf{Bright} & \textbf{Contrast} & \textbf{Elastic} & \textbf{Pixel} & \textbf{JPEG} & \textbf{Mean$\downarrow$} & \textbf{Gain$\uparrow$}\\
\midrule

Source&60.1&53.2&38.3&19.9&35.5&22.6&18.6&12.1&12.7&22.8&5.3&49.7&23.6&24.7&23.1&28.2&0.0\\
% Pseudo \cite{Leeetal2013}&59.8&52.5&37.2&19.8&35.2&21.8&17.6&11.6&12.3&20.7&5.0&41.7&21.5&25.2&22.1&26.9&+1.3\\
Tent \cite{wang2021tent}&57.7&56.3&29.4&16.2&35.3&16.2&12.4&11.0&11.6&14.9&4.7&22.5&15.9&29.1&19.5&23.5&+4.7\\
CoTTA \cite{wang2022continual}&58.7&51.3&33.0&20.1&34.8&20&15.2&11.1&11.3&18.5&4.0&34.7&18.8&19.0&17.9&24.6&+3.6\\
VDP\cite{gan2023decorate}&57.5&49.5&31.7&21.3&35.1&19.6&15.1&10.8&10.3&18.1&4&27.5&18.4&22.5&19.9&24.1&+4.1\\
ViDA \cite{liu2023vida} & 52.9 & 47.9 & 19.4 & 11.4 & 31.3 & 13.3 & 7.6 & 7.6 & 9.9 & 12.5 & 3.8 & 26.3 & 14.4 & 33.9 & 18.2 & 20.7 & +7.5\\

\rowcolor{lightgray} 
\textbf{DPCore(Proposed)}& \textbf{22.0} & \textbf{18.2} & \textbf{14.9} & \textbf{14.3} & \textbf{24.4} & \textbf{13.9} & \textbf{12.0} & \textbf{11.6} & \textbf{10.7} & \textbf{15.0} & \textbf{5.7} & \textbf{21.8} & \textbf{15.6} & \textbf{12.7} & \textbf{18.0} & \textbf{15.4} & \textbf{+12.8} \\
\bottomrule
\end{tabular}
}
% % \vspace{-0.2cm}
\end{table*}

\begin{table*}[ht]
% \tiny
\caption{\textbf{Classification error rate (\%) for CIFAR10-to-CIFAR10C in the CDC setting.} }
% \vspace{-.3cm}
\label{tab:cifar10c_cdc}
\centering
\setlength{\tabcolsep}{2.5pt}
\resizebox{1.0\textwidth}{!}{%

\begin{tabular}{c|ccc|cccc|cccc|cccc|cc}
\toprule
 \textbf{Algorithm} & \textbf{Gauss.} & \textbf{Shot} & \textbf{Impulse} & \textbf{Defocus} & \textbf{Glass} & \textbf{Motion} & \textbf{Zoom} & \textbf{Snow} & \textbf{Frost} & \textbf{Fog} & \textbf{Bright} & \textbf{Contrast} & \textbf{Elastic} & \textbf{Pixel} & \textbf{JPEG} & \textbf{Mean$\downarrow$} & \textbf{Gain$\uparrow$}\\
\midrule

Source&60.1&53.2&38.3&19.9&35.5&22.6&18.6&12.1&12.7&22.8&5.3&49.7&23.6&24.7&23.1&28.2&0.0\\
Tent \cite{wang2021tent}&57.4 & 51.7 & 34.2 & 18.2 & 35.1 & 22.4 & 16.9 & 11.8 & 12.6 & 21.6 & 5.1 & 36.7 & 22.1 & 24.3 & 23.2 & 26.2 & +2.0\\
CoTTA \cite{wang2022continual}&59.2 & 54.8 & 45.1 & 20.5 & 36.1 & 21.9 & 19.1 & 11.6 & 12.1 & 21.2 & 6.1 & 35.1 & 22.5 & 21.7 & 19.7 & 27.1 & +1.1\\
VDP\cite{gan2023decorate}&58.1 & 51.2 & 32.2 & 22.4 & 34.8 & 20.4 & 15.6 & 11.2 & 11.4 & 20.1 & 4.9 & 30.4 & 20.4 & 22.7 & 19.9 & 25.0 & +3.2\\
ViDA \cite{liu2023vida} & 56.4 & 50.8 & 28.4 & 16.7 & 32.5 & 15.2 & 13.7 & 10.1 & 10.4 & 13.5 & 4.2 & 26.5 & 19.4 & 35.1 & 21.2 & 23.6 & +4.6\\

\rowcolor{lightgray} 
\textbf{DPCore(Proposed)}& \textbf{24.1} & \textbf{20.6} & \textbf{24.5} & \textbf{13.9} & \textbf{26.5} & \textbf{14.2} & \textbf{13.2} & \textbf{13.4} & \textbf{10.2} & \textbf{12.8} & \textbf{5.0} & \textbf{22.5} & \textbf{16.8} & \textbf{20.1} & \textbf{18.5} & \textbf{17.1} & \textbf{+11.1} \\
\bottomrule
\end{tabular}
}
% % \vspace{-0.2cm}
\end{table*}

\subsection{Comprehensive Results for Cityscapes-to-ACDC}
\label{sec:app_acdc}
In this section, we provide fine-grained results on Cityscapes-to-ACDC, presented in Table~\ref{tab:acdc_csc} and Table~\ref{tab:acdc_cdc_full} for CSC and CDC settings respectively. We evaluate performance across three rounds and report the average mIoU scores for each domain and round, offering a comprehensive view of adaptation effectiveness over repeated domain sequences.
\section{Discussion on CTTA Setting.}
\label{sec:app_additional_settings}

\begin{table*}[ht]
% \tiny
\caption{\textbf{Classification error rate (\%) for ImageNet-to-ImageNet-C in the CDC-2D setting.}}
% \vspace{-.3cm}
\label{tab:imagenetc_cdc}
\centering
\setlength{\tabcolsep}{2.5pt}
\resizebox{1.0\textwidth}{!}{%

\begin{tabular}{c|ccc|cccc|cccc|cccc|cc}
\toprule
 \textbf{Algorithm} & \textbf{Gauss.} & \textbf{Shot} & \textbf{Impulse} & \textbf{Defocus} & \textbf{Glass} & \textbf{Motion} & \textbf{Zoom} & \textbf{Snow} & \textbf{Frost} & \textbf{Fog} & \textbf{Bright} & \textbf{Contrast} & \textbf{Elastic} & \textbf{Pixel} & \textbf{JPEG} & \textbf{Mean$\downarrow$} & \textbf{Gain$\uparrow$}\\
\midrule
Source&53.0&51.8&52.1&68.5&78.8&58.5&63.3&49.9&54.2&57.7&26.4&91.4&57.5&38.0&36.2&55.8&0.0 \\

Tent \cite{wang2022continual}&54.1 & 53.2 & 52.9 & 66.7 & 74.6 & 56.6 & 61.2 & 49.2 & 52.3 & 57.5 & 32.1 & 90.4 & 56.1 & 37.7 & 37.1 & 55.4 & +0.4 \\

CoTTA \cite{wang2022continual}&52.4 & 52.2 & 51.1 & 68.4 & 76.3 & 57.9 & 62.1 & 48.7 & 53.1 & 55.1 & 26.1 & 88.9 & 57.1 & 37.6 & 38.2 & 55.0 & +0.8\\

VDP \cite{gan2023decorate}&53.1 & 53.3 & 51.5 & 62.4 & 73.1 & 53.4 & 60.1 & 43.5 & 54.3 & 58.2 & 25.1 & 82.2 & 56.6 & 35.7 & 36.3 & 53.3 & 2.5 \\

SAR \cite{niu2023towards}&47.0 & 46.1 & 46.1 & 56.1 & 66.4 & 49.9 & 55.1 & 41.1 & 44.8 & 50.9 & 23.6 & 65.3 & 54.2 & 32.9 & 31.4 & 47.4 & +8.1\\

EcoTTA \cite{song2023ecotta} & 48.9 & 47.4 & 49.3 & 59.2 & 71.2 & 54.1 & 59.8 & 46.2 & 44.3 & 57.2 & 24.0 & 84.1 & 55.2 & 37.2 & 35.3 & 51.6 & +4.2 \\

ViDA \cite{liu2023vida} & 48.8 & 49.2 & 47.3 & 56.6 & 71.6 & 55.3 & 59.6 & 41.2 & 49.4 & 60.5 & 27.1 & 83.9 & 57.9 & 34.6 & 34.4 & 51.8 & +4.0 \\

\rowcolor{lightgray} 
\textbf{DPCore (Proposed)} & \textbf{42.7} & \textbf{40.4} & \textbf{42.2} & \textbf{57.4} & \textbf{61.0} & \textbf{51.1} & \textbf{52.4} & \textbf{35.8} & \textbf{41.0} & \textbf{38.8} & \textbf{22.1} & \textbf{55.4} & \textbf{48.3} & \textbf{31.1} & \textbf{28.1} & \textbf{43.2} & \textbf{+12.6}   \\
\bottomrule
\end{tabular}
}
% \vspace{-0.2cm}
\end{table*}
\subsection{CDC with Different Dirichlet Distributions} \label{sec:app_different_dirichlet}
We analyze how different domain change patterns affect CTTA methods by varying the Dirichlet distribution parameter $\delta$ that controls our CDC simulation. As shown in Fig.~\ref{fig:imagenet_cdc_all_dirichlet}, smaller $\delta$ values (0.01, 0.1) produce domain changes more similar to CSC, while larger values (1, 10) lead to increasingly frequent transitions. DPCore demonstrates remarkable stability across all settings, with error rates increasing only modestly from 39.9\% (CSC) to 43.1\% ($\delta=10$), consistently outperforming previous SOTA methods. The stability of other methods, however, deteriorates significantly as $\delta$ increases. In the most challenging scenario ($\delta=10$) where domains change most rapidly, most previous methods perform similarly to or worse than the source model (55.8\%). While SAR maintains some improvement with 50.2\% error rate, DPCore significantly outperforms all methods with 43.1\%, demonstrating its robust adaptation capability even under extreme domain variation.

\begin{table*}
    
    \caption{\textbf{Average error rate (\%) of DPCore across 10 different CDC-2D settings (R1-10) on ImageNet-to-ImageNet-C.}}
    % % % \vspace{-.1cm}
    \label{table:cdc_multi_different}
    %\footnotesize
    % \scriptsize
    \small
    \centering
    \setlength{\tabcolsep}{6pt}
    \begin{tabular}{ccccccccccccccc}
    \toprule
     & \textbf{Source} & Uniform & R1 & R2 & R3& R4& R5& R6& R7& R8& R9& R10& \textbf{Average} & \textbf{Std} \\ 
    \midrule
    Err Mean$\downarrow$&55.8& 43.2 & 42.1 & 42.8 & 44.2 & 43.5 & 43.5 & 42.7 & 43.9 & 44.4 & 44.7 & 43.0 & 43.5 & 0.7 \\
    
    \bottomrule
    \end{tabular}
    % % % \vspace{-.5cm}
\end{table*}
\subsection{CDC with Other Distributions}
\label{sec:app_cdc_2d}
To provide a comprehensive understanding of our CDC setting beyond the Dirichlet distribution, we present a sequence generation method based on two independent distributions: one for domain selection and another for domain duration (number of batches), termed as \textbf{CDC-2D}. This two-distribution approach allows us to flexibly control both which domains appear and how long they persist, creating realistic scenarios of domain changes.  

Taking uniform distributions as a simple case, we iteratively construct the domain sequence through the following process: at each step, we randomly select a domain from the domain pool using uniform probability, then randomly determine the number of consecutive batches to assign to this domain, also using uniform distribution. If a selected domain has no remaining batches, we perform reselection until finding an available domain. This process continues until all batches across all domains have been assigned, naturally ensuring unpredictable domain changes with varying durations while maintaining complete coverage of all domains.

This uniform distribution-based generation, which we term CDC-2D, offers an alternative perspective on dynamic domain changes compared to the Dirichlet approach in the main paper. As shown in Table~\ref{table:cdc_multi_different}, we observe two significant trends similar to our main results: First, previous SOTA methods suffer substantial performance degradation when moving from CSC to this CDC-2D setting, with several methods showing minimal improvement over the source model's 55.8\% error rate and even the best baseline achieving only moderate gains. This degradation is particularly evident in challenging corruption types where most methods struggle to significantly improve upon the source model. Second, DPCore maintains robust performance with a 43.2\% error rate (comparable to 42.1\% achieved with Dirichlet distribution) and consistently outperforms other methods, achieving a significant improvement of +12.6\% over the source model while the second-best method only achieves +8.1%.

To further validate DPCore's effectiveness across diverse domain change patterns, we explore CDC settings with different probability distributions. Taking ImageNet-to-ImageNet-C as an example with its 15 corruption domains, we consider probability vectors $[P_1, P_2, ..., P_{15}]$ where each $P_i$ represents the probability of selecting the $i$-th domain. These vectors must satisfy $P_i>0$ for all $i$ (ensuring every domain has some probability of being selected) and $\sum_{i=1}^{15}P_i=1$ (making it a valid probability distribution). The uniform distribution represents a special case where $P_i=\frac{1}{15}$ for all $i$. We randomly generate ten different such probability vectors, each defining a unique pattern of domain changes. The results in Table~\ref{table:cdc_multi_different} demonstrate DPCore's remarkable consistency across these varied settings, achieving a mean error rate of 43.5\% with standard deviation of 0.7. This stability across different domain change patterns further validates DPCore's robustness and its ability to handle diverse scenarios of distribution shift.

\begin{table}
    
    \caption{Average error rate (\%) of DPCore and RoTTA across various CTTA settings on ImageNet-to-ImageNet-C. The improvement over the source model is shown in parentheses.}
    \label{table:ptta}
    \centering
    \setlength{\tabcolsep}{7pt}
    \scriptsize
    \begin{tabular}{ccccc}
    \toprule
     \textbf{Algorithm} & \textbf{CSC} & \textbf{PTTA} & \textbf{CDC} &  \textbf{Average} \\ 
    \midrule
    \textbf{Source} & 55.8& 55.8 & 55.8 & 0\\
    \textbf{RoTTA} & 48.2 (+7.6) & 49.5 (+6.35) & 53.1 (+2.7) & 50.3 (+5.6) \\
    \rowcolor{lightgray} 
\textbf{DPCore} & \textbf{39.9 (+15.9)} & \textbf{43.9 (+12.0)} & \textbf{42.1 (13.7)} & \textbf{42.0 (+13.9)} \\
    \bottomrule
    \end{tabular}
    % \vspace{-.5cm}
\end{table}
\subsection{Comparison with Other CTTA Settings}
In this section, we discuss how our proposed CDC setting differs from other CTTA variants. First, we consider the repeating setting introduced in \cite{wang2022continual}, where 15 corruption domains are repeated for 10 rounds. While this setting eventually accumulates numerous domain changes, it differs fundamentally from CDC: each domain maintains uniform duration (782 batches), and changes occur gradually. This structured nature makes it more similar to CSC than our CDC setting, where domain durations vary significantly and changes occur more frequently.

Second, the gradual domain change setting proposed in \cite{lee2024becotta} assumes blurred domain boundaries, where batches near transitions contain mixed-domain data. While this realistic assumption differs from traditional CTTA methods, it still follows the CSC pattern where mixed batches constitute only a small portion of all data due to long domain durations. Although we haven't directly evaluated DPCore in this setting, our proposed variant DPCore-B (detailed in Section~\ref{sec:app_small_batch_size}) demonstrates effective handling of mixed-domain batches, suggesting potential applicability to gradual domain changes.

Third, the Practical Test-Time Adaptation (PTTA) setting \cite{rotta_cvpr23} assumes label-balanced batches with local class correlation (e.g., single-class batches)  in CSC. This contrasts with our setting, which follows the common CTTA assumption \cite{wang2022continual, niu2022efficient, liu2023vida, cmae_cvpr24} of uniform sampling and label balance. Interestingly, despite not being designed for PTTA, DPCore shows substantial improvement in this setting (results in Table~\ref{table:ptta}). Meanwhile, RoTTA \cite{rotta_cvpr23}, though specifically designed for PTTA, demonstrates limited effectiveness on ImageNet-C with ViTs compared to its stronger performance on smaller datasets like CIFAR10, consistent with findings in \cite{roid_wacv24}. Furthermore, RoTTA's performance degrades significantly in our CDC setting. In contrast, DPCore maintains robust performance across CSC, CDC, and PTTA settings, demonstrating its versatility across different test-time scenarios.

Our CDC setting introduces unique challenges through its combination of frequent domain changes and varying domain durations, distinguishing it from previous CTTA variants. This more dynamic and unpredictable environment better reflects real-world scenarios while posing significant challenges for existing CTTA methods.

\section{Additional Ablation Studies}
\label{sec:app_additional_ablation}
\begin{table}
    
    \caption{Average error rate (\%) of DPCore-B across various buffer zone size $D$ on ImageNet-to-ImageNet-C in CSC setting when batch size is one.}
    \label{table:dpcoreb_small_batchsize}
    %\footnotesize
    % \scriptsize
    % \small
    \centering
    \setlength{\tabcolsep}{4.5pt}
    \begin{tabular}{cccccc}
    \toprule
     & \textbf{Source} & D$=8$ & D$=16$ & D$=32$ &  D$=64$ \\ 
    \midrule
    \textbf{Err Mean$\downarrow$} & 55.8 & 43.0 & 42.1 & 41.6 & 41.2  \\
    \bottomrule
    \end{tabular}
    % \vspace{-.5cm}
\end{table}
\subsection{DPCore with Small Batch Size}
\label{sec:app_small_batch_size}
While DPCore performs effectively with batch sizes of four or larger (as shown in the Fig.~\ref{fig:ablation_batch_size}), significant challenges emerge with extremely small batches, particularly in single-sample scenarios. To address this limitation, we propose DPCore-B (DPCore with Buffer Zone), which accumulates a predefined number of samples ($D$) in the buffer zone before performing any updates. During this accumulation period, we temporarily store the \texttt{[CLS]} features from each batch.

Our evaluation of DPCore-B in the CSC setting with various values of $D$ shows promising results (Table \ref{table:dpcoreb_small_batchsize}), effectively maintaining DPCore's performance even with small batches. However, this success is limited to the CSC setting where domain changes are structured and predictable. Even when accumulated samples span domain boundaries (violating the single-domain batch assumption shared by most CTTA methods, including ours), the impact remains minimal in CSC due to the limited number of domain transitions. For instance, on ImageNet-to-ImageNet-C with batch size 64, each domain has 782 batches, totaling 11,730 batches across 15 domains. Only 14 transition points potentially contain mixed-domain data, causing a negligible impact on overall performance.

In contrast, CDC presents a fundamentally more challenging scenario where domain transitions occur frequently and unpredictably. The substantially higher number of potential mixed-domain batches in CDC violates our method's assumptions more severely, making DPCore-B ineffective. This limitation further demonstrates why CDC, with its frequent domain changes and varying durations, better reflects real-world challenges compared to conventional CSC settings. The development of effective small-batch adaptation strategies for CDC remains an important direction for future research.

\subsection{DPCore without Access to Source Data}
\label{sec:app_num_src_data}
In this section, we explore an extreme scenario where source data is completely inaccessible, further demonstrating DPCore's practicality in real-world applications. Following the insights from \cite{mehra2023analysis}, we propose using public datasets that share similar or identical label spaces. For instance, in ImageNet-to-ImageNet-C or CIFAR10-to-CIFAR10C tasks, we leverage STL10 \cite{stl10}, which shares label space with CIFAR10 despite being distinct from ImageNet.

To ensure quality reference data, we filter the public dataset using the source model with an entropy-based threshold. Specifically, following \cite{niu2022efficient}, we set the threshold as $0.4 \times \ln C$, where $C$ is the number of classes in the task (e.g., 1000 for ImageNet-to-ImageNet-C). We maintain consistency with our main experiments by selecting 300 unlabeled samples from STL10 using this filtering strategy and compute statistics to replace the source statistics. Remarkably, as shown in Fig.~\ref{fig:ablation_num_src_data}, using these 300 filtered STL10 images as reference data for ImageNet-C experiments still achieves an error rate of 45.6\%, yielding a substantial 10.2\% improvement over the source model. These results demonstrate that DPCore can maintain effective adaptation even without access to source data, further establishing its practical utility in real-world scenarios where source data might be unavailable due to privacy, security, or storage constraints.

\subsection{Details in Computational Complexity} \label{sec:app_computation} We provide a detailed analysis of the computation and memory efficiency of DPCore. In Table~\ref{table:computation}, we list the total number of trainable parameters required for ImageNet-to-ImageNet-C tasks. The parameter count for DPCore is not static; it increases with the addition of more core elements. The number of core elements depends on the number of unseen domains encountered. However, we demonstrate in Fig.~\ref{fig:10rounds_err_size} that the core set size remains stable across different domain orders of 15 corruptions in the CSC setting.

For ImageNet-to-ImageNet-C tasks, the total number of parameters is computed as the number of core elements (14) multiplied by the prompt length (8) and the dimension of each prompt token (768). The counts for forward and backward propagations are averaged per batch. In DPCore, a new prompt is learned from scratch over 50 steps only when a test batch is evaluated as originating from a potential new domain. This results in the forward and backward operations being repeated for 50 steps for these batches. Notably, only 14 prompts undergo this extensive learning process out of 11,730 batches. For the remaining batches, DPCore updates the weighted prompt in a single step (one forward and one backward operation) and then uses the updated weighted prompt to refine all existing prompts without additional forward or backward passes. Therefore, the average number of backward propagations per batch is $\frac{14\times50+(11730-14)\times1}{11730}\approx 1.06$. For forward propagation, each batch requires two additional forward passes (one without prompt, one with weighted prompt) for evaluation, leading to an average of $1.06 + 2 = 3.06$ forward passes.

The computational overhead of learning new prompts becomes negligible given the large number of total batches. Additionally, the parameters introduced by the prompts are minimal compared to the parameter increases required by other CTTA methods. Consequently, our prompt parameters do not require any warm-up, which conserves memory and reduces computation both before and during adaptation.

\section{Discussions and Limitations}
\label{sec:app_limitation}
\subsection{Potential Failure Cases}
Our method, DPCore, is designed to be robust against scenarios such as \textit{Overlapping Distributions}, \textit{Noisy or Small Domain Shifts}, and \textit{Inconsistent Weighting Effects} being inherent failure cases. The core objective of DPCore is not to identify an identical number of prompts as domains or to pinpoint every distinct domain. Instead, it aims to update the same prompt for groups of similar domains. This is supported by our findings (Sec.~\ref{sec:method_dynamic_update} and Appendix~\ref{sec:app_single_domain_prompt}) that a prompt learned for one domain (e.g., Gaussian Noise) can be effective for a similar domain (e.g., Shot Noise) and can be further improved with minor updates. Due to the use of distance-induced weights, a new domain can be effectively represented as a decomposition of existing domains, allowing the weighted prompt to perform well even if derived from different domains, as domain similarity is assessed by distance and prompts are learned through distance minimization. A more significant potential failure case arises when each test batch contains data from multiple domains. In this situation, batch statistics would become unstable, potentially leading DPCore to treat each batch as a new, distinct domain. This would significantly reduce efficiency, as the algorithm would need to learn prompts from scratch for each batch, a scenario analogous to our experimental setup in Table~\ref{tab:component_analysis} Exp-2 (where prompts were learned from scratch for each batch, though each batch was from a single domain). Such multi-domain batches pose a challenge to most CTTA methods, which typically assume single-domain test batches. We identify this as an avenue for future investigation.

\subsection{Limitations}
While our method significantly advances CTTA, it has certain limitations that may affect its broader application. Firstly, DPCore requires access to source statistics or reference data, which may not always be available, especially in scenarios where the source data is proprietary or sensitive. Secondly, DPCore assumes that each batch of data comes from the same domain, which may not hold true in mixed-domain environments. In real-world applications, data streams could contain a mix of samples from various domains, and the method's performance in such scenarios remains to be investigated. Furthermore, while DPCore demonstrates impressive efficiency compared to other methods, it still introduces additional computational overhead during the adaptation phase, which may pose challenges in resource-constrained environments or applications with strict latency requirements.
\begin{table*}[ht]
% \tiny
\caption{\textbf{Classification error rate (\%) for CIFAR100-to-CIFAR100C in the CSC setting.}}
% \vspace{-.3cm}
\label{tab:cifar100c_csc}
\centering
\setlength{\tabcolsep}{2.5pt}
\resizebox{1.0\textwidth}{!}{%

\begin{tabular}{c|ccc|cccc|cccc|cccc|cc}
\toprule
 \textbf{Algorithm} & \textbf{Gauss.} & \textbf{Shot} & \textbf{Impulse} & \textbf{Defocus} & \textbf{Glass} & \textbf{Motion} & \textbf{Zoom} & \textbf{Snow} & \textbf{Frost} & \textbf{Fog} & \textbf{Bright} & \textbf{Contrast} & \textbf{Elastic} & \textbf{Pixel} & \textbf{JPEG} & \textbf{Mean$\downarrow$} & \textbf{Gain$\uparrow$}\\
\midrule
Source&55.0&51.5&26.9&24.0&60.5&29.0&21.4&21.1&25.0&35.2&11.8&34.8&43.2&56.0&35.9&35.4&0.0\\
% Pseudo-label \cite{Leeetal2013}&53.8&48.9&25.4&23.0&58.7&27.3&19.6&20.6&23.4&31.3&11.8&28.4&39.6&52.3&33.9&33.2&+2.2\\
Tent  \cite{wang2021tent} &53.0&47.0&24.6&22.3&58.5&26.5&19.0&21.0&23.0&30.1&11.8&25.2&39.0&47.1&33.3&32.1&+3.3\\
CoTTA  \cite{wang2022continual}&55.0&51.3&25.8&24.1&59.2&28.9&21.4&21.0&24.7&34.9&11.7&31.7&40.4&55.7&35.6&34.8&+0.6\\
VDP \cite{gan2023decorate}&54.8&51.2&25.6&24.2&59.1&28.8&21.2&20.5&23.3&33.8&7.5&11.7&32.0&51.7&35.2&32.0&+3.4\\
ViDA \cite{liu2023vida} &50.1 & 40.7 & 22.0 & 21.2 & 45.2 & 21.6 & 16.5 & 17.9 & 16.6 & 25.6 & 11.5 & 29.0 & 29.6 & 34.7 & 27.1 & 27.3 & +8.1\\

\rowcolor{lightgray} 
\textbf{DPCore (Proposed)} & \textbf{48.2} & \textbf{40.2} & \textbf{21.3} & \textbf{20.2} & \textbf{44.1} & \textbf{21.1} & \textbf{16.2} & \textbf{18.1} & \textbf{15.2} & \textbf{22.3} & \textbf{9.4} & \textbf{13.2} & \textbf{28.6} & \textbf{32.8} & \textbf{25.5} & \textbf{25.1} & \textbf{+10.3}\\
\bottomrule
\end{tabular}
}
% % \vspace{-0.2cm}
\end{table*}

\begin{table*}[ht]
% \tiny
\caption{\textbf{Classification error rate (\%) for CIFAR100-to-CIFAR100C in the CDC setting.}}
% \vspace{-.3cm}
\label{tab:cifar100c_cdc}
\centering
\setlength{\tabcolsep}{2.5pt}
\resizebox{1.0\textwidth}{!}{%

\begin{tabular}{c|ccc|cccc|cccc|cccc|cc}
\toprule
 \textbf{Algorithm} & \textbf{Gauss.} & \textbf{Shot} & \textbf{Impulse} & \textbf{Defocus} & \textbf{Glass} & \textbf{Motion} & \textbf{Zoom} & \textbf{Snow} & \textbf{Frost} & \textbf{Fog} & \textbf{Bright} & \textbf{Contrast} & \textbf{Elastic} & \textbf{Pixel} & \textbf{JPEG} & \textbf{Mean$\downarrow$} & \textbf{Gain$\uparrow$}\\
\midrule
Source&55.0&51.5&26.9&24.0&60.5&29.0&21.4&21.1&25.0&35.2&11.8&34.8&43.2&56.0&35.9&35.4&0.0\\

Tent  \cite{wang2021tent} &53.5 & 46.5 & 26.2 & 25.8 & 61.0 & 28.1 & 23.5 & 21.7 & 22.6 & 30.9 & 11.0 & 24.4 & 41.3 & 49.1 & 36.7 & 33.5 & +1.9 \\
CoTTA  \cite{wang2022continual}&53.3 & 51.9 & 27.1 & 26.3 & 60.5 & 29.2 & 21.3 & 22.5 & 23.5 & 35.6 & 13.7 & 33.7 & 41.6 & 59.1 & 39.4 & 35.9 & -0.5 \\
VDP \cite{gan2023decorate}&56.6 & 53.5 & 31.8 & 29.1 & 63.9 & 33.9 & 23.5 & 25.7 & 29.9 & 38.5 & 12.1 & 15.5 & 34.0 & 53.8 & 39.6 & 36.1 & -0.7 \\
ViDA \cite{liu2023vida} &50.1 & 42.1 & 23.9 & 23.3 & 48.1 & 23.7 & 19.5 & 18.7 & 18.5 & 29.6 & 11.6 & 36.1 & 32.6 & 37.1 & 32.8 & 29.8 & +5.6 \\

\rowcolor{lightgray} 
\textbf{DPCore (Proposed)} & \textbf{54.0} & \textbf{42.5} & \textbf{23.5} & \textbf{22.8} & \textbf{45.3} & \textbf{21.4} & \textbf{18.6} & \textbf{21.2} & \textbf{16.8} & \textbf{23.2} & \textbf{10.0} & \textbf{15.1} & \textbf{35.7} & \textbf{34.6} & \textbf{25.9} & \textbf{27.4} & \textbf{+8.0} \\
\bottomrule
\end{tabular}
}
% % \vspace{-0.2cm}
\end{table*}

\begin{table*}[ht]
% \tiny
\caption{\textbf{mIoU score for Cityscapes-to-ACDC in the CSC setting.} The same target domains are repeated three times.}
% \vspace{-.3cm}
\label{tab:acdc_csc}
\centering
\setlength{\tabcolsep}{2.5pt}
\resizebox{1.0\textwidth}{!}{%

\begin{tabular}{c|ccccc|ccccc|ccccc|cc}
\toprule
\multirow{2}{*}{\textbf{Algorithm}}& \multicolumn{5}{c|}{\textbf{Round 1}}    & \multicolumn{5}{c|}{\textbf{Round 2}}     & \multicolumn{5}{c|}{\textbf{Round 3}}  & \multirow{2}{*}{\textbf{Mean$\uparrow$}}   & \multirow{2}{*}{\textbf{Gain$\uparrow$}}  \\
\cline{2-16}
 & \textbf{Fog} & \textbf{Night} & \textbf{Rain} & \textbf{Snow} & \textbf{Mean$\uparrow$} & \textbf{Fog} & \textbf{Night} & \textbf{Rain} & \textbf{Snow} & \textbf{Mean$\uparrow$} &\textbf{Fog} & \textbf{Night} & \textbf{Rain} & \textbf{Snow} & \textbf{Mean$\uparrow$} \\
\midrule
Source   &69.1&40.3&59.7&57.8&56.7&69.1&40.3&59.7& 	57.8&56.7&69.1&40.3&59.7& 57.8&56.7&56.7&0.0\\ 
Tent \cite{wang2021tent} &69.0&40.2&60.1&57.3&56.7&68.3&39.0&60.1& 	56.3&55.9&67.5&37.8&59.6&55.0&55.0&55.7&-1.0\\ 
CoTTA \cite{wang2022continual}  &70.9&41.2&62.4&59.7&58.6&70.9&41.1&62.6& 	59.7&58.6&70.9&41.0&62.7&59.7&58.6&58.6&+1.9\\ 
DePT \cite{gao2022visual}
&71.0&40.8&58.2&56.8&56.5&68.2&40.0&55.4&53.7& 54.3&66.4&38.0&47.3&47.2&49.7&53.4&-3.3\\
VDP \cite{gan2023decorate}  &70.5&41.1&62.1&59.5&  58.3    &70.4&41.1&62.2&59.4& 58.2     & 70.4&41.0&62.2&59.4& 58.2   &  58.2 & +1.5\\
SAR \cite{niu2023towards} & 69.0 & 40.2 & 60.1 & 57.3 & 56.7 & 69.0 & 40.3 & 60.0 & 67.8 & 59.3 & 67.5 & 37.8 & 59.6 & 55.0 & 55.0 & 57.0 & +0.3\\

EcoTTA \cite{song2023ecotta}&68.5 & 35.8 & 62.1 & 57.4 & 56.0 & 68.3 & 35.5 & 62.3 & 57.4 & 55.9 & 68.1 & 35.3 & 62.3 & 57.3 & 55.8 & 55.8 & -0.9\\

ViDA \cite{liu2023vida} & 71.6 & 43.2 & 66.0 & 63.4 & 61.1 & 73.2 & 44.5 & 67.0 & 63.9 & 62.2 & 73.2 & 44.6 & 67.2 & 64.2 & 62.3 & 61.9 & +5.2\\

C-MAE \cite{cmae_cvpr24} & 71.9 & 44.6 & 67.4& 63.2 & 61.8& 71.7& 44.9 & 66.5& 63.1& 61.6& 72.3& 45.4 & 67.1& 63.1 & 62.0 & 61.8 & +5.1 \\

\rowcolor{lightgray} 
\textbf{DPCore (Proposed)} &\textbf{71.7} & \textbf{47.2} & \textbf{66.1} & \textbf{63.3} & \textbf{62.1} & \textbf{73.0} & \textbf{47.8} & \textbf{66.5} & \textbf{63.1} & \textbf{62.6} & \textbf{73.4} & \textbf{47.8} & \textbf{67.1} & \textbf{62.7} & \textbf{62.8} & \textbf{62.5} & \textbf{+5.8}  \\
\bottomrule
\end{tabular}
}
\vspace{-0.5cm}
\end{table*}

\begin{table*}[ht]
% \tiny
\caption{\textbf{mIoU score for Cityscapes-to-ACDC in the CDC setting.} The same target domains are repeated three times.}
% \vspace{-.3cm}
\label{tab:acdc_cdc_full}
\centering
\setlength{\tabcolsep}{2.5pt}
\resizebox{1.0\textwidth}{!}{%

\begin{tabular}{c|ccccc|ccccc|ccccc|cc}
\toprule
\multirow{2}{*}{\textbf{Algorithm}}& \multicolumn{5}{c|}{\textbf{Round 1}}    & \multicolumn{5}{c|}{\textbf{Round 2}}     & \multicolumn{5}{c|}{\textbf{Round 3}}  & \multirow{2}{*}{\textbf{Mean$\uparrow$}}   & \multirow{2}{*}{\textbf{Gain$\uparrow$}}  \\
\cline{2-16}
 & \textbf{Fog} & \textbf{Night} & \textbf{Rain} & \textbf{Snow} & \textbf{Mean$\uparrow$} & \textbf{Fog} & \textbf{Night} & \textbf{Rain} & \textbf{Snow} & \textbf{Mean$\uparrow$} &\textbf{Fog} & \textbf{Night} & \textbf{Rain} & \textbf{Snow} & \textbf{Mean$\uparrow$} \\
\midrule
Source   &69.1&40.3&59.7&57.8&56.7&69.1&40.3&59.7& 	57.8&56.7&69.1&40.3&59.7& 57.8&56.7&56.7&0.0\\ 
Tent \cite{wang2021tent} & 68.5 & 39.8 & 59.7 & 58.3 & 56.6 & 67.0 & 38.1 & 58.9 & 55.3 & 54.8 & 66.1 & 36.4 & 58.4 & 52.6 & 53.4 & 54.9 & -1.8  \\
CoTTA \cite{wang2022continual} & 70.1 & 41.2 & 61.9 & 58.6 & 58.0 & 69.7 & 41.0 & 61.4 & 57.2 & 57.3 & 68.7 & 40.5 & 60.3 & 56.4 & 56.5 & 57.3 & +0.5  \\
DePT \cite{gan2023decorate} & 69.5 & 40.2 & 58.7 & 56.2 & 56.2 & 68.4 & 39.8 & 58.3 & 55.7 & 55.6 & 66.7 & 39.1 & 57.9 & 54.3 & 54.5 & 55.4 & -1.3  \\
VDP \cite{gan2023decorate} & 70.5 & 40.8 & 61.9 & 59.2 & 58.1 & 70.1 & 40.3 & 61.2 & 57.8 & 57.4 & 70.0 & 39.2 & 59.9 & 57.6 & 56.7 & 57.4 & +0.7  \\
SAR \cite{niu2023towards} & 69.1 & 41.6 & 59.9 & 57.5 & 57.0 & 68.8 & 41.2 & 59.7 & 57.4 & 56.8 & 69.1 & 40.8 & 59.2 & 57.0 & 56.5 & 56.8 & +0.1  \\
EcoTTA \cite{song2023ecotta} & 68.4 & 34.6 & 61.8 & 57.4 & 55.6 & 68.1 & 34.3 & 61.6 & 57.2 & 55.3 & 68.1 & 33.4 & 61.7 & 57.0 & 55.1 & 55.3 & -1.4  \\
ViDA \cite{liu2023vida} & 71.0 & 42.1 & 64.2 & 62.9 & 60.1 & 70.6 & 41.5 & 62.1 & 62.1 & 59.1 & 70.2 & 40.9 & 61.5 & 61.6 & 58.6 & 59.2 & +2.5  \\

\rowcolor{lightgray} 
\textbf{DPCore (Proposed)} &\textbf{71.9} & \textbf{46.3} & \textbf{64.1} & \textbf{63.3} & \textbf{61.4} & \textbf{71.6} & \textbf{45.1} & \textbf{63.4} & \textbf{63.1} & \textbf{60.8} & \textbf{71.6} & \textbf{44.2} & \textbf{63.9} & \textbf{63.2} & \textbf{60.7} & \textbf{61.0} & \textbf{+4.3}   \\
\bottomrule
\end{tabular}
}
% \vspace{-0.5cm}
\end{table*}

\end{document}